\documentclass[runningheads]{llncs}

\usepackage[T1]{fontenc}
\usepackage[utf8]{inputenc}
\usepackage{graphicx}
\usepackage{booktabs}
\usepackage{amsmath}
\usepackage{amssymb}
\usepackage{amsfonts}
\usepackage{nicefrac}
\usepackage[expansion=false]{microtype}
\usepackage{xcolor}
\usepackage{multirow}
\usepackage{tabularx}
\usepackage{adjustbox}
\usepackage{subcaption}
\usepackage{caption}
\usepackage{algorithm}
\usepackage{algorithmic}
\usepackage{cleveref}
\usepackage{enumitem}
\usepackage[most]{tcolorbox}
\usepackage{float}
\usepackage{newfloat}
\usepackage{listings}
\usepackage{textcomp}
\usepackage[hyphens]{url}
\usetikzlibrary{arrows.meta,positioning,fit,backgrounds}

\graphicspath{{../}{./}}

\spnewtheorem{nbtheorem}{Theorem}{\bfseries}{\itshape}
\spnewtheorem{nbproposition}{Proposition}{\bfseries}{\itshape}
\spnewtheorem{nbdefinition}{Definition}{\bfseries}{\itshape}

\newcommand{\Tau}{T}
\newcommand{\ModelName}{FITTER}
\newcommand{\parabold}[1]{\vspace{2pt}\noindent\textbf{#1}}

\floatstyle{ruled}
\newfloat{listing}{tb}{lst}{}
\floatname{listing}{Listing}

\begin{document}

\title{FITTER: Vocabulary-Agnostic Cross-Domain Inference on Temporal Knowledge Graphs}

\titlerunning{FITTER: Vocabulary-Agnostic Cross-Domain Inference on TKGs}
\author{Jiaxin Pan\inst{1} \and
Mojtaba Nayyeri\inst{2} \and
Osama Mohammed\inst{1} \and
Daniel Hern\'{a}ndez\inst{1} \and
Rongchuan Zhang\inst{1} \and
Cheng Cheng\inst{1} \and
Steffen Staab\inst{1,3}}
\authorrunning{J. Pan et al.}
\institute{University of Stuttgart, Stuttgart, Germany\\
\email{\{jiaxin.pan, osama.mohammed, steffen.staab\}@ki.uni-stuttgart.de}\\
\email{daniel@degu.cl}, \email{\{st191486, st180913\}@stud.uni-stuttgart.de}
\and
SAP SE, Germany\\
\email{mojtaba.nayyeri@ki.uni-stuttgart.de}
\and
University of Southampton, Southampton, United Kingdom}

\maketitle

\begin{abstract}
Temporal knowledge graphs are central to many uses of the Semantic Web, but existing completion methods assume the entities, relation names, and timestamps to be reasoned about are already known at training time, restricting each model to a single graph and vocabulary. We propose \ModelName{}, the first fully-inductive structural model for temporal knowledge graph link prediction that supports cross-domain transfer: the inference graph may contain entirely unseen entities, relation names, and timestamps drawn from a different domain. \ModelName{} represents each predicate by its interaction patterns with others and time through encodings of relative rather than absolute ordering; message-passing fuses local and global temporal context to produce vocabulary-agnostic embeddings. We prove the temporal encoding is time-shift invariant and evaluate \ModelName{} on cross-domain, cross-graph transfer over six temporal knowledge graph benchmarks of diverse domains, granularities, and time spans. \ModelName{} consistently outperforms inductive baselines without retraining, indicating that vocabulary-agnostic structural learning is a viable foundation for inference over the heterogeneous knowledge graphs of the Semantic Web.

\keywords{Temporal Knowledge Graphs \and Inductive Link Prediction \and Knowledge Graph Embedding \and Graph Neural Networks \and Cross-Domain Transfer}
\end{abstract}

\section{Introduction}
Temporal Knowledge Graphs (TKGs) are a central object of study in the Semantic Web: large openly published graphs such as Wikidata~\cite{vrandevcic2014wikidata}, DBpedia~\cite{lehmann2015dbpedia}, and YAGO~\cite{hoffart2013yago2} carry millions of facts that hold during specific intervals or at specific points in time, modelled through reified statements carrying temporal qualifiers (Wikidata's \textit{start time}, \textit{end time}, \textit{point in time}), through named graphs annotated with valid times, or through RDF-star triples. Abstracting over these representations, a TKG fact is commonly treated as a time-stamped quadruple $(s, p, o, \tau)$, where $s$, $p$, $o$, and $\tau$ denote the subject entity, predicate, object entity, and timestamp. Reasoning over such quadruples supports a wide range of downstream tasks including question answering~\cite{jia2021complex}, event forecasting~\cite{cai2024surveytemporalknowledgegraph}, and temporal recommendation; among these, temporal link prediction, i.e., predicting missing entities in queries of the form $(s, p, ?, \tau)$ or $(?, p, o, \tau)$, is one of the central problems in temporal knowledge graph research.

Existing Temporal Knowledge Graph Embedding (TKGE) models~\cite{leblay2018deriving,garcia-duran-etal-2018-learning,tcomplexlacroix2020tensor} are primarily developed for temporal interpolation or extrapolation settings, where the entity and relation vocabularies are shared between training and inference. In practice, however, temporal knowledge graphs often differ substantially across domains. For example, diplomatic event graphs such as ICEWS contain dense daily interactions, while encyclopedic graphs such as YAGO describe sparse long-term facts spanning decades or centuries. These datasets also differ in temporal granularity, time span, and relational structure.

This heterogeneity raises a natural transfer challenge: can a model trained on one TKG generalize to a different TKG without retraining? Such a capability is particularly useful in cold-start scenarios, where a target TKG is newly constructed or contains only limited training data. It is also desirable in practical deployments where training separate models for every temporal graph is computationally expensive. More broadly, despite differences in vocabulary and temporal scale, many TKGs share common structural and temporal patterns. For example, interaction sequences such as negotiation following conflict or recurring temporal transitions between related events may appear across multiple domains. A transferable model should therefore capture structural and temporal regularities independently of dataset-specific entities, relations, or timestamps.

Most existing TKGE models rely on dataset-specific entity, relation, and timestamp embeddings. Consequently, their learned representations cannot be directly applied to graphs containing unseen entities, relations, or timestamps. Recent work on static knowledge graphs has explored fully-inductive reasoning through structure-based relational learning \cite{galkintowards,lee2023ingram,zhangtrix,du2026graphoracle}, demonstrating that transferable representations can be learned without relying on fixed vocabularies. However, extending this paradigm to temporal knowledge graphs remains challenging due to the additional complexity introduced by temporal dynamics, heterogeneous temporal granularities, and varying time spans.

In this work, we study cross-domain transfer for temporal knowledge graphs under a fully-inductive setting, where the training and inference graphs contain disjoint entity, relation, and timestamp sets. We propose \textbf{\ModelName{}} (\underline{F}ully \underline{I}nductive \underline{T}ime-aware \underline{T}ransf\underline{e}rable \underline{R}epresentation), a structure-driven TKGE framework designed for cross-domain transfer across heterogeneous temporal knowledge graphs.

\ModelName{} addresses two key challenges in transferable temporal reasoning. First, temporal knowledge graphs may operate at different granularities and over different time spans, making dataset-specific timestamp embeddings difficult to transfer across domains. To address this, \ModelName{} represents temporal information using sinusoidal positional encodings over snapshot indices, enabling the model to capture relative temporal ordering independently of absolute timestamps. Second, to transfer structural knowledge across unseen vocabularies, \ModelName{} constructs universal relation interaction graphs based on vocabulary-agnostic interaction types and learns adaptive entity and relation representations through temporal-aware message passing. In addition, \ModelName{} combines local and global temporal contexts to capture both short-term event dependencies and long-range temporal patterns.

We evaluate \ModelName{} on 15 cross-dataset transfer settings spanning six temporal knowledge graphs with diverse domains, temporal granularities, and time spans. Experimental results show that \ModelName{} consistently outperforms existing inductive baselines in cross-domain transfer settings while remaining competitive with transductive approaches on in-domain evaluation.

Our contributions are summarized as follows:
\begin{itemize}
    \item We formulate cross-domain transfer for temporal knowledge graphs as a fully-inductive link prediction problem.
    \item We propose \ModelName{}, a transferable TKGE framework that avoids dataset-specific entity, relation, and timestamp embeddings.
    \item We introduce transferable temporal representations based on relative temporal ordering together with vocabulary-agnostic relational structure learning.
    \item We conduct extensive experiments across heterogeneous temporal knowledge graphs and demonstrate strong cross-domain transfer performance across domains and temporal granularities.
\end{itemize}

\section{Task Formulation}

\newcommand{\GG}{{G}}
\newcommand{\VV}{{V}}
\newcommand{\RR}{{R}}
\newcommand{\QQ}{\{Q\}}
\newcommand{\Gtrain}{{G}_{\mathrm{train}}}
\newcommand{\Vtrain}{{V}_{\mathrm{train}}}
\newcommand{\Rtrain}{{R}_{\mathrm{train}}}
\newcommand{\Ttrain}{T_{\mathrm{train}}}
\newcommand{\Qtrain}{{Q}_{\mathrm{train}}}
\newcommand{\Ginf}{{G}_{\mathrm{inf}}}
\newcommand{\Vinf}{{V}_{\mathrm{inf}}}
\newcommand{\Rinf}{{R}_{\mathrm{inf}}}
\newcommand{\Tinf}{T_{\mathrm{inf}}}
\newcommand{\Qinf}{{Q}_{\mathrm{inf}}}
\newcommand{\Gmsg}{{G}_{\mathrm{msg}}}
\newcommand{\Tmsg}{T_{\mathrm{msg}}}
\newcommand{\Qmsg}{{Q}_{\mathrm{msg}}}
\newcommand{\Gvalid}{{G}_{\mathrm{valid}}}
\newcommand{\Tvalid}{T_{\mathrm{valid}}}
\newcommand{\Qvalid}{{Q}_{\mathrm{valid}}}
\newcommand{\Gtest}{{G}_{\mathrm{test}}}
\newcommand{\Ttest}{T_{\mathrm{test}}}
\newcommand{\Qtest}{{Q}_{\mathrm{test}}}
\newcommand{\txt}{\Sigma^*}
\newcommand{\labl}{{l}}

A \emph{temporal knowledge graph} is a tuple $G = (V, R, T, Q)$, where $V$ is a set of entities, $R$ a set of relations, $T$ an ordered set of timestamps, and $Q \subseteq V \times R \times V \times T$ a set of temporal facts. The \emph{temporal link prediction} task asks: given a query $(s, p, ?, \tau)$ or $(?, p, o, \tau)$, predict the missing entity.

\begin{figure}[!t]
\centering
\includegraphics[width=\textwidth]{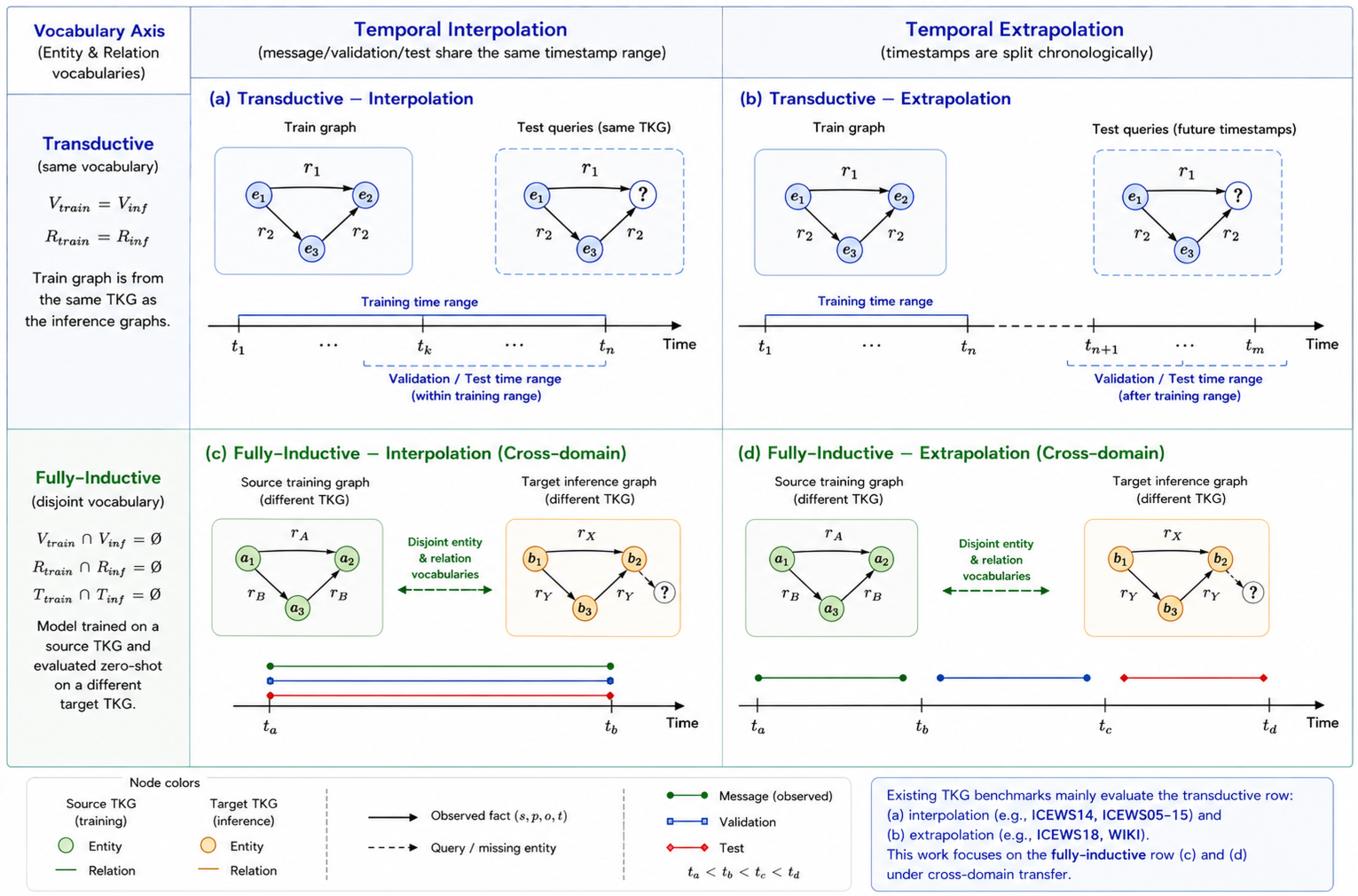}
\caption{Taxonomy of TKG inference settings along the vocabulary axis (transductive vs.\ fully-inductive) and temporal axis (interpolation vs.\ extrapolation). This work studies fully-inductive cross-domain transfer under both regimes.}
\label{fig:inference_example}
\end{figure}

\subsection{Inference Settings}
\label{sec:inference_settings}

We characterize TKG inference settings along two orthogonal axes: a \emph{vocabulary axis} and a \emph{temporal axis}. The vocabulary axis describes whether the entity and relation vocabularies are shared between the training and inference graphs. The temporal axis describes whether inference timestamps are evaluated within an observed temporal range (\emph{interpolation}) or on future timestamps beyond the observed range (\emph{extrapolation}). Combining the two axes yields a $2 \times 2$ taxonomy of inference settings, illustrated in Figure~\ref{fig:inference_example}.

Let $\Gtrain = (\Vtrain, \Rtrain, \Ttrain, \Qtrain)$ denote the \emph{source training graph} used for model training, and let $\Ginf = (\Vinf, \Rinf, \Tinf, \Qinf)$ denote the \emph{inference graph} used during evaluation. We partition the inference graph facts as
\[
    \Qinf = \Qmsg \cup \Qvalid \cup \Qtest,
\]
where $\Qmsg$, $\Qvalid$, and $\Qtest$ denote the \emph{message graph}, \emph{validation set}, and \emph{test set}, respectively. Their corresponding timestamp sets are $\Tmsg$, $\Tvalid$, and $\Ttest$.

\noindent\textbf{Vocabulary axis.}
In \emph{transductive} settings, the training and inference graphs share the same entity and relation vocabularies:
\[
    \Vtrain = \Vinf, \qquad \Rtrain = \Rinf.
\]
This is the standard setting adopted by existing TKG embedding methods, where models learn dataset-specific entity, relation, and timestamp embeddings.

In \emph{fully-inductive} settings, the training and inference graphs contain disjoint entity, relation, and timestamp sets:
\[
    \Vtrain \cap \Vinf = \emptyset, \qquad \Rtrain \cap \Rinf = \emptyset, \qquad \Ttrain \cap \Tinf = \emptyset.
\]
As a result, representations tied to a specific vocabulary are not directly applicable at inference time. A practically important special case is \textbf{cross-domain transfer}, where the \textbf{training and inference graphs originate from different source TKGs}.

\noindent\textbf{Temporal axis.}
In \emph{interpolation} settings, message, validation, and test facts share the same timestamp range:
\[
    \Tmsg = \Tvalid = \Ttest.
\]
In \emph{extrapolation} settings, the inference graph is split chronologically such that:
\[
    \max(\Tmsg) < \min(\Tvalid) < \min(\Ttest).
\]
For transductive settings, the message graph corresponds to the training graph of the same TKG. For fully-inductive settings, the message graph belongs to the target inference TKG and is used only for inference-time message passing.

\noindent\textbf{Our setting.}
Existing TKG benchmarks primarily evaluate models in transductive settings under either interpolation (e.g., ICEWS14, ICEWS05-15) or extrapolation (e.g., ICEWS18, WIKI). This paper studies fully-inductive inference under cross-domain transfer, where a model trained on one source TKG is evaluated directly on a different target TKG without retraining, under both interpolation and extrapolation conditions.

\begin{figure}[!t]
    \centering
    \includegraphics[width=\textwidth]{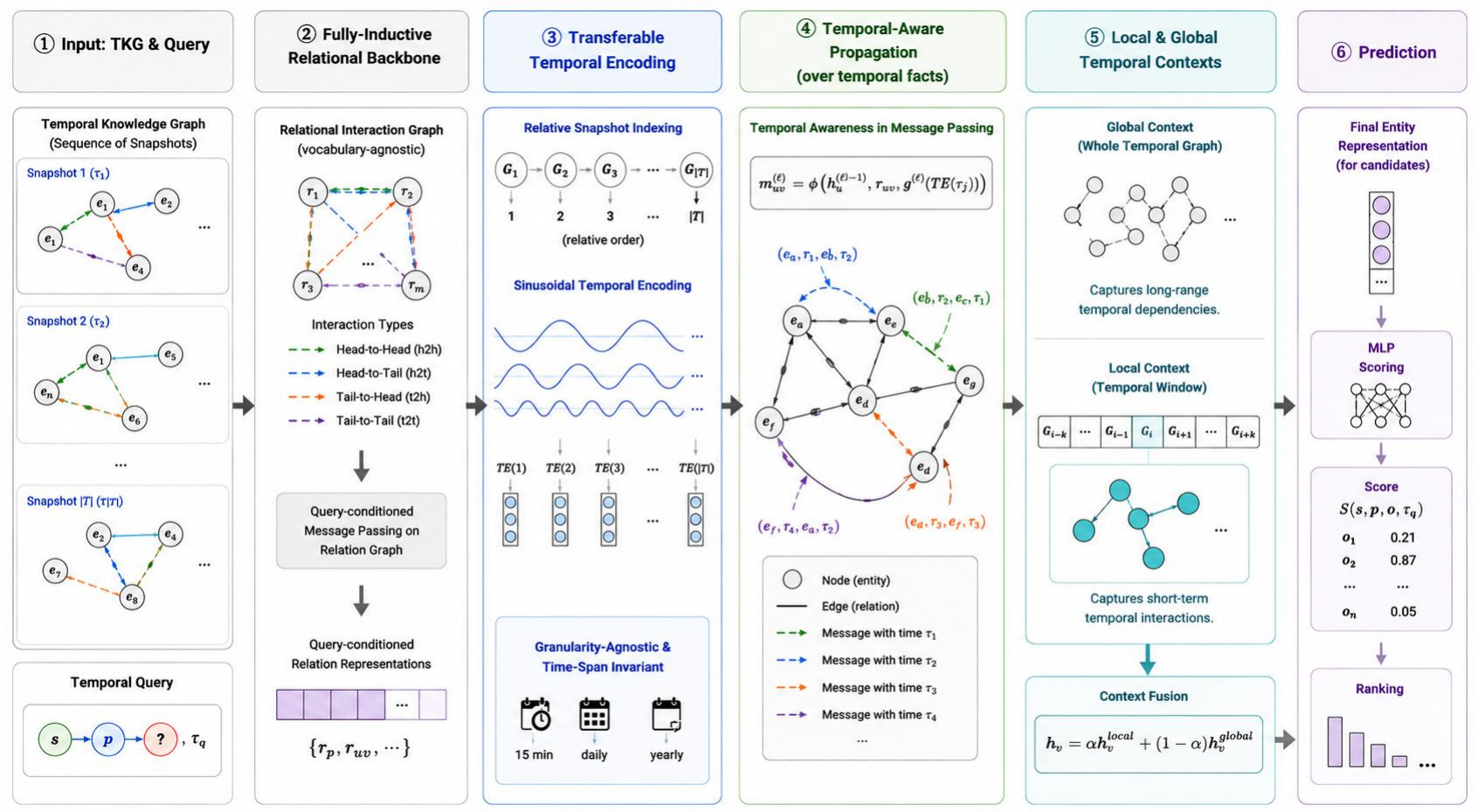}
    \caption{Overview of \ModelName{}. ULTRA constructs vocabulary-agnostic relation representations on a relation interaction graph. \ModelName{} adds three temporal components (blue): sinusoidal temporal encodings, temporal-aware entity propagation, and local/global context fusion. An MLP scores the fused representations.}
    \label{fig:model_overview}
\end{figure}

\section{Related Work}
\subsection{Temporal Knowledge Graph Embedding}

\noindent\textbf{Transductive temporal interpolation models.}
Early TKGE models primarily address temporal interpolation, where training and inference share the same entity and relation vocabularies and test queries are evaluated within the observed temporal range. TTransE~\cite{leblay2018deriving} and TA-DistMult~\cite{garcia-duran-etal-2018-learning} incorporate temporal information into scoring functions by treating time as an additional component of temporal facts. TComplEx and TNTComplEx~\cite{tcomplexlacroix2020tensor} formulate temporal knowledge graph completion as a fourth-order tensor completion problem, while TLT-KGE~\cite{tltcomplexzhang2022along} extends temporal representations into quaternion space. HGE~\cite{pan2024hge} further embeds temporal knowledge graphs into heterogeneous geometric subspaces and uses attention to capture structural and geometric similarities. Although effective in standard interpolation benchmarks, these methods rely on dataset-specific entity, relation, and timestamp embeddings, which prevents direct application to graphs with unseen vocabularies.

\noindent\textbf{Transductive temporal extrapolation models.}
Another line of work focuses on temporal extrapolation, where models predict future facts beyond the observed time range. Many of these approaches use recurrent or graph neural architectures to encode historical graph evolution and forecast future events~\cite{li2022complex,li2022tirgn,sun2021timetraveler,liang2023learn}. For improved explainability, TLogic~\cite{liu2022tlogic} extracts temporal logical rules via temporal random walks, while xERTE~\cite{han2020explainable} performs prediction using explainable temporal subgraphs.

\noindent\textbf{Limitations of existing TKGE models.}
Collectively, existing TKGE methods operate in settings (a) and (b) of Figure~\ref{fig:inference_example}: they assume vocabulary overlap between training and inference, making them unsuitable for the fully-inductive cross-domain settings (c) and (d) studied in this work. \ModelName{} addresses this gap with vocabulary-agnostic structural representations and transferable temporal encodings.

\subsection{Inductive and Cross-Domain Learning on KGs}

\noindent\textbf{Structure-based fully-inductive learning.}
Early inductive methods for static KGs, such as NBFNet~\cite{zhu2021neural}, Grail~\cite{teru2020inductive}, INDIGO~\cite{liu2021indigo}, and Morse~\cite{chen2022meta}, learn relational structures that generalize to unseen entities but still assume fixed relation vocabularies. More recent works, including INGRAM~\cite{lee2023ingram}, ULTRA~\cite{galkintowards}, TRIX~\cite{zhangtrix}, and GraphOracle~\cite{du2026graphoracle}, further enable transfer across entirely unseen entity and relation sets by constructing vocabulary-agnostic relation interaction graphs or relation dependency structures. However, all of these approaches are designed for static knowledge graphs and therefore do not model temporal dynamics, temporal ordering, or heterogeneous temporal granularities required for temporal knowledge graphs.

\noindent\textbf{Text-based and LLM-based transfer for TKGs.}
Recent approaches explore inductive inference on TKGs using textual descriptions and large language models (LLMs). Methods such as ICL~\cite{lee2023temporal}, zRLLM~\cite{ding2024zrllm}, GenTKG~\cite{liao2024gentkg}, and Chain-of-History~\cite{luo2024chain} leverage semantic information from entity descriptions, relation names, or in-context temporal reasoning to support transfer across unseen temporal facts. However, their transferability primarily arises from textual semantics rather than structural temporal representations. As a result, they depend heavily on annotation quality, textual availability, prompting strategies, and large-scale pretrained models, while often underutilizing the structural and relational dependencies inherent in temporal knowledge graphs. In contrast, \ModelName{} is purely \textit{structure-driven}: it learns transferable temporal and relational patterns directly from graph structure without requiring textual annotations or external language models.

\section{NBFNet-Style Fully-Inductive KG Reasoning}
\label{sec:background_nbfnet}
\label{sec:relation_learning}

We build on ULTRA~\cite{galkintowards}, a fully-inductive KG reasoning framework derived from NBFNet-style message passing~\cite{zhu2021neural}. ULTRA is particularly suitable for cross-domain transfer because it avoids dataset-specific entity and relation embeddings, enabling inference on entirely unseen graphs.

\noindent\textbf{Query-conditioned message passing.}
Given a query $(s, r, ?)$, these models perform query-conditioned reasoning by propagating information over the graph. The source entity $s$ is initialized using the query relation $r$, and message passing iteratively aggregates relational information from neighboring entities. At each layer $\ell$, messages are computed over graph edges and aggregated to update node representations:
\begin{equation}
    m_{uv}^{(\ell)} = \phi\!\left(h_u^{(\ell-1)},\, r_{uv}\right), \qquad
    h_v^{(\ell)} = \mathrm{AGG}\!\left(\left\{m_{uv}^{(\ell)} : u \in \mathcal{N}(v)\right\}\right).
\end{equation}

\noindent\textbf{Vocabulary-agnostic relation representations.}
Unlike conventional KG embedding methods that learn fixed embeddings for each relation, ULTRA constructs relation representations from a vocabulary-agnostic \emph{relation interaction graph}. In this graph, relations are treated as nodes and connected through structural interaction types: head-to-head, head-to-tail, tail-to-head, and tail-to-tail interactions. Message passing on the relation graph produces query-conditioned relation representations that generalize to unseen relations and unseen graphs.

\section{FITTER}

\ModelName{} introduces three temporal components on top of the relational backbone: (1) transferable temporal encodings that capture relative ordering independently of absolute timestamps; (2) temporal-aware message passing that incorporates time into fact propagation; and (3) local/global context fusion that integrates short-term and long-range temporal dependencies. Together, these components enable \ModelName{} to learn structural and temporal patterns that transfer across domains without any dataset-specific identifiers. The overall architecture is illustrated in Figure~\ref{fig:model_overview}.

\subsection{Transferable Temporal Encoding}
\label{sec:temporal_embedding}

\ModelName{} represents temporal information via \emph{relative snapshot ordering} rather than absolute timestamps, enabling transfer across TKGs with different granularities and time spans. Given a temporal knowledge graph $G$, we represent it as an ordered sequence of temporal snapshots $\{G_1, G_2, \dots, G_{|T|}\}$, where each snapshot $G_i$ contains all temporal facts associated with the $i$-th timestamp in chronological order. We write $\tau_i$ for the timestamp at snapshot $G_i$.

We encode the temporal position of snapshot $G_i$ using sinusoidal positional encodings:
\begin{align*}
    [\operatorname{TE}(i)]_{2n}    &= \sin(\omega_n\, i), \qquad
    [\operatorname{TE}(i)]_{2n+1}   = \cos(\omega_n\, i),\\
    \omega_n &= \beta^{-2n/d},
\end{align*}
where $d$ denotes the temporal embedding dimension and $\beta$ controls the frequency scale. The index $n \in \{0, 1, \dots, d/2 - 1\}$ enumerates the $d/2$ dimension pairs of the encoding: each pair $\big([\operatorname{TE}(i)]_{2n}, [\operatorname{TE}(i)]_{2n+1}\big)$ oscillates at its own frequency $\omega_n = \beta^{-2n/d}$, which decreases geometrically with $n$, so that small $n$ captures fast, short-term variation and large $n$ captures slow, long-range variation. We also write $\operatorname{TE}_{\tau_i} = \operatorname{TE}(i)$.

Unlike learned timestamp embeddings, this representation is independent of dataset-specific timestamp identities and depends only on relative temporal ordering. As a result, it naturally supports transfer across temporal knowledge graphs with different timestamp vocabularies, temporal granularities, and time spans. Furthermore, the multi-frequency sinusoidal representation enables the model to capture both short-term and long-range temporal dependencies, as supported by prior analysis of sinusoidal encodings in Transformer architectures~\cite{su2024roformer}.

\subsection{Temporal-Aware Quadruple Propagation}
\label{sec:entity_learning}
\label{sec:local_global}

Temporal relations exhibit substantially different dynamics: some evolve slowly over long time spans (e.g., diplomatic alliances), while others are highly localized and depend primarily on recent events (e.g., daily conflict responses). \ModelName{} therefore injects temporal information via two complementary graphs: a \emph{global entity graph} over the full TKG for slow-changing relations, and a \emph{local entity graph} restricted to a window around the query timestamp for fast-changing ones. Given query-conditioned relation representations $\mathbf{r}_p$ from the relational backbone, \ModelName{} runs message passing on both graphs simultaneously.

\parabold{Global quadruple representation.}
The \emph{global entity graph} performs message passing over the entire temporal knowledge graph, capturing long-range temporal dependencies. For a temporal query $(s, p, ?, \tau_i)$, the initial entity representation is:
\begin{equation}
    \mathbf{e}_{v|s}^{0} = \mathbf{1}_{v=s} \cdot \mathbf{r}_p, \quad v \in G,
\end{equation}
where $\mathbf{r}_p$ is the query-conditioned relation representation from the relational backbone. At each message-passing layer, \ModelName{} injects temporal information into propagation:
\begin{align}
\label{eq:tmsgglobal}
\mathbf{e}_{v|s}^{l+1} &= \operatorname{AGG} \Big(
\operatorname{T\text{-}MSG}\!\big(\mathbf{e}_{w}^l,\, \mathbf{r}_q,\, \textstyle\sum g^{l+1}(\operatorname{TE}_{\tau_j})\big) \ \Big| \notag \\
&\qquad (e_w, q, v, \tau_j) \in G,\ w \in N_q(v),\ q \in R
\Big),
\end{align}
where $\operatorname{TE}_{\tau_j}$ is the temporal encoding of timestamp $\tau_j$ and $g^{l+1}(\cdot)$ is a learnable linear transformation per layer. Here $\operatorname{T\text{-}MSG}(\cdot)$ denotes the \emph{temporal message function}: it computes the message propagated along an edge by applying a temporal scoring function, such as T(NT)ComplEx~\cite{tcomplexlacroix2020tensor}, to the neighbor representation $\mathbf{e}_{w}^l$, the query-relation representation $\mathbf{r}_q$, and the injected temporal encoding. This is how time enters propagation: the temporal encoding modulates each message rather than being appended as a separate feature.

\parabold{Local quadruple representation.}
The \emph{local entity graph} restricts propagation to a temporal window centered around the query timestamp:
\begin{equation}
    G_{\mathrm{local}} = \{G_{i-k},\, \dots,\, G_i,\, \dots,\, G_{i+k}\},
\end{equation}
where $k$ controls the window size. The local representation follows the same temporal-aware propagation as Eq.~\ref{eq:tmsgglobal} but restricted to $G_{\mathrm{local}}$:
\begin{align}
\mathbf{e}_{v|s,\tau_i,\text{local}}^{0} &= \mathbf{1}_{v=s} \cdot \mathbf{r}_p, \quad v \in G_{\text{local}}, \\[2pt]
\label{eq:tmsglocal_update}
\mathbf{e}_{v|s,\tau_i,\text{local}}^{l+1} &= \operatorname{AGG} \Big(
\operatorname{T\text{-}MSG}\!\big(\mathbf{e}_{w,\text{local}}^l,\, \mathbf{r}_q,\, g^{l+1}(\operatorname{TE}_{\tau_j})\big) \Big| \notag \\
&\qquad (e_w, q, v, \tau_j) \in G_{\text{local}},\ w \in N_q(v),\ q \in R
\Big).
\end{align}

\parabold{Context fusion and scoring.}
The final entity representation combines local and global contexts:
\begin{equation}
    V_{s,p,o} = \alpha\, \mathbf{e}_{o|s,\tau_i,\text{local}} + (1-\alpha)\, \mathbf{e}_{o|s},
\end{equation}
where $\alpha$ balances short-term and long-range temporal information. The quadruple score is then computed as:
\begin{equation}
\label{eq:scorefunctionequation}
    S(s, p, o, \tau_i) = f_\theta\!\bigl(V_{s,p,o},\, \operatorname{TE}_{\tau_i}\bigr),
\end{equation}
where $f_\theta$ is a multilayer perceptron. By integrating transferable temporal encodings into both global and local message propagation, \ModelName{} captures both immediate temporal interactions and long-range temporal dependencies, improving cross-domain temporal transferability.

\subsection{Loss Function}
\label{sec:loss}

Given a temporal query $(s, p, ?, \tau)$, \ModelName{} predicts scores for candidate entities using the temporal-aware representations described above. We train the model using a binary cross-entropy objective with negative sampling. For each positive quadruple $(s, p, o, \tau)$, negative samples are generated by corrupting the head or tail entity. The training objective is:
\begin{equation}
    \mathcal{L} = -\sum_{(s,p,o,\tau)\in Q}
    \Bigl[y\log\sigma\!\bigl(S(s,p,o,\tau)\bigr)
    + (1-y)\log\!\bigl(1-\sigma\!\bigl(S(s,p,o,\tau)\bigr)\bigr)\Bigr],
\end{equation}
where $y \in \{0,1\}$ denotes the label of the quadruple and $\sigma(\cdot)$ is the sigmoid function. This objective encourages \ModelName{} to distinguish valid temporal facts from corrupted quadruples while learning transferable temporal and structural representations.

\noindent\textbf{Theoretical analysis.} We present a formal analysis of \ModelName{}'s ability to model temporal transferability, diverse temporal periodicities, and various temporal frequencies in Appendix~\ref{sec:theoretical_analysis}.

\section{Experimental Setup}
\label{sec:experiment_setup}
\paragraph{Datasets}
To evaluate cross-domain transfer, we conduct link prediction experiments across six widely-used TKG benchmark datasets\footnote{Code, pre-trained checkpoints, and datasets are available at \url{https://github.com/shouhulantian/FITTER}.} that differ substantially in domain, temporal granularity, and time span: ICEWS14~\cite{garcia-duran-etal-2018-learning}, ICEWS05-15~\cite{garcia-duran-etal-2018-learning}, GDELT~\cite{trivedi2017know}, ICEWS18~\cite{jin2020recurrent}, YAGO~\cite{jin2020recurrent}, and WIKI\cite{leblay2018deriving}. These datasets cover a diverse range of domains and temporal characteristics:

\begin{enumerate}
    \item \textbf{Inference Type:} ICEWS14, ICEWS05-15, and GDELT were designed for transductive (temporal interpolation) inference, while ICEWS18, YAGO and WIKI were developed for semi-inductive (temporal extrapolation) tasks.
    
    \item \textbf{Domain Coverage:} ICEWS14, ICEWS05-15, and ICEWS18 are subsets of the Integrated Crisis Early Warning System (ICEWS)~\cite{lautenschlager2015icews}, consisting of news event data. GDELT is a large-scale graph focused on capturing human behavior events. YAGO and WIKI are derived from the knowledge bases for commonsense facts.
    
    \item \textbf{Temporal Granularity:} ICEWS14, ICEWS05-15, and ICEWS18 use daily timestamps. GDELT provides fine-grained 15-minute intervals, while YAGO and WIKI use yearly granularity.
    
    \item \textbf{Time Span:} ICEWS14 and GDELT, ICEWS18 cover only 1 year, while ICEWS05-15 spans 11 years. Besides, YAGO and WIKI span a long range around 200 years.
\end{enumerate}

We perform \textbf{fully-inductive cross-domain evaluation}: the model is trained on one TKG and evaluated directly on a different TKG without retraining, ensuring no overlap in entities, relations, or timestamps between training and inference graphs. Table \ref{table:dataset} in the Appendix shows the dataset splits.

\paragraph{Experimental protocol} For each cross-domain scenario, the model is trained on $\Gtrain$ (the source TKG training split), with the best checkpoint selected on the source TKG validation split. The trained model is then applied in a fully-inductive manner to the target TKG $\Ginf$, whose facts are partitioned as $\Qinf = \Qmsg \cup \Qvalid \cup \Qtest$ following Section~\ref{sec:inference_settings}. Concretely, $\Qmsg$ corresponds to the target TKG training split and is used solely for inference-time message passing to build entity representations; no gradient updates are performed on it. $\Qvalid$ (target TKG validation split) is used to tune inference-time hyperparameters such as the local window size $k$ and fusion weight $\alpha$. $\Qtest$ (target TKG test split) yields the final reported results. This ensures strict separation between training supervision on the source graph and fully-inductive inference on the target graph, with no overlap in entities, relations, or timestamps across $\Gtrain$ and $\Ginf$.

\paragraph{Baselines}

We evaluate \ModelName{} under two regimes:

\noindent\textbf{(i) Fully-inductive cross-domain transfer.}
For cross-domain transfer, we compare against INGRAM~\cite{lee2023ingram} and ULTRA~\cite{galkintowards}, two strong fully-inductive reasoning models originally developed for static knowledge graphs. Both methods learn vocabulary-agnostic relational representations that generalize to unseen entities and relations, making them natural baselines for cross-domain evaluation. We retrain both models on temporal knowledge graphs under the same experimental protocol used for \ModelName{}. Transductive temporal extrapolation models such as RE-NET~\cite{jin2020recurrent}, CyGNet~\cite{zhu2021learning}, and TiRGN~\cite{li2022tirgn} are architecturally incompatible with this setting, as they depend on entity-indexed parameters undefined for unseen entities; we discuss this in detail in Appendix~\ref{app:incompatible_baselines}.

\noindent\textbf{(ii) Transductive in-domain evaluation.}
For standard transductive evaluation, we compare \ModelName{} against representative TKGE models, including TComplEx and TNTComplEx~\cite{tcomplexlacroix2020tensor}, as well as recent fully-inductive KG reasoning methods adapted to the temporal setting, including TRIX~\cite{zhangtrix}, ULTRA~\cite{galkintowards}, and GraphOracle~\cite{du2026graphoracle}. GraphOracle is included only in the transductive setting: we identified evaluation-time leakage in the original results and, after correction, the model became computationally impractical for cross-domain evaluation (see Appendix~\ref{app:graphoracle}).

\paragraph{Evaluation Metrics}
We adopt link prediction task to evaluate our proposed model. 
During the test step, we follow the procedure of \cite{xu2020tero} to generate candidate quadruples. From a test quadruple $(s,p,o, \tau)$, we replace $s$ with all $\bar{s} \in V$ and $o$ with all $\bar{o} \in V$ to get candidate answer quadruples $(s,p,\bar{o}, \tau)$ and $ (\bar{s},p,o, \tau)$ to queries $(s,p,?,\tau)$ and $(?, p,o,\tau)$.  
For each query, all candidate answer quadruples will be ranked by their scores using a time-aware filtering strategy \cite{goel2020diachronic}. We evaluate our models with three metrics: Mean Reciprocal Rank (MRR), the mean of the reciprocals of predicted ranks of correct quadruples, and Hits@(1/10), the percentage of ranks not higher than 1/10. For all experiments, the higher the better. 
We provide the hyperparameters and training details in Section \ref{sec:parameter} in the Appendix.

\begin{table*}[t!]
\centering
\caption{Fully-inductive link prediction results. Each block shows training on one dataset and testing on the remaining \textit{without fine-tuning}. Best results are in \textbf{bold}.}
\label{tbl:LinkPredictionResults}
\begin{minipage}{\textwidth}
\begin{adjustbox}{width=\textwidth}
\begin{tabular}{lccccccccccccccc||ccc}
\toprule
\multicolumn{19}{c}{\textbf{Trained on ICEWS14}} \\
\midrule
\multirow{2}{*}{Model} 
& \multicolumn{3}{c}{\textbf{To ICEWS05-15}} 
& \multicolumn{3}{c}{\textbf{To GDELT}} 
& \multicolumn{3}{c}{\textbf{To ICEWS18}} 
& \multicolumn{3}{c}{\textbf{To YAGO}} 
& \multicolumn{3}{c}{\textbf{To WIKI}} 
& \multicolumn{3}{c}{\textbf{Total Avg}}\\
& MRR & H@1 & H@10 & MRR & H@1 & H@10 & MRR & H@1 & H@10 & MRR & H@1 & H@10 & MRR & H@1 & H@10 & MRR & H@1 & H@10\\
\midrule
INGRAM      & 6.3  & 1.3  & 16.3 & 7.7  & 3.0  & 15.9 & 10.3 & 7.0  & 13.4 & 17.5 & 13.2 & 25.6 & 1.7 & 0.4 & 4.2 & 8.7 & 5.0 & 15.1 \\
ULTRA       & 30.1 & 21.4 & 52.3 & 17.0 & 10.0 & 30.0 & \underline{18.8} & \underline{9.5} & \underline{43.4} & 63.7 & 50.0 & 86.5 & \underline{42.4} & \underline{29.2} & \underline{67.6} & 34.4 & 24.0 & \underline{56.0}\\
TRIX        & \underline{32.7} & \underline{22.4} & \underline{53.3} & \underline{17.2} & \underline{10.4} & \underline{33.0} & 18.0 & 9.2 & 36.9 & \underline{72.6} & \underline{64.7} & \underline{87.2} & 41.4 & 29.1 & 65.5 & \underline{36.4} & \underline{27.2} & 55.2 \\
\midrule
\textbf{\ModelName{}} & \textbf{42.9} & \textbf{35.3} & \textbf{55.3} & \textbf{26.1} & \textbf{17.2} & \textbf{43.8} & \textbf{22.8} & \textbf{12.8} & \textbf{44.4} & \textbf{79.2} & \textbf{73.6} & \textbf{88.6} & \textbf{51.5} & \textbf{39.4} & \textbf{72.0} & \textbf{44.5} & \textbf{35.7} & \textbf{60.8}\\
\bottomrule
\end{tabular}
\end{adjustbox}
\end{minipage}


\centering
\begin{minipage}{\textwidth}
\begin{adjustbox}{width=\textwidth}
\begin{tabular}{lccccccccccccccc||ccc}
\toprule
\multicolumn{19}{c}{\textbf{Trained on ICEWS05-15}} \\
\midrule
\multirow{2}{*}{Model} 
& \multicolumn{3}{c}{\textbf{To ICEWS14}} 
& \multicolumn{3}{c}{\textbf{To GDELT}} 
& \multicolumn{3}{c}{\textbf{To ICEWS18}} 
& \multicolumn{3}{c}{\textbf{To YAGO}}
& \multicolumn{3}{c}{\textbf{To WIKI}}
& \multicolumn{3}{c}{\textbf{Total Avg}}\\
& MRR & H@1 & H@10 & MRR & H@1 & H@10 & MRR & H@1 & H@10 & MRR & H@1 & H@10 & MRR & H@1 & H@10 & MRR & H@1 & H@10 \\
\midrule
INGRAM      & 11.1 & 3.8  & 27.8 & 3.8  & 1.0  & 7.1  & 6.2  & 2.0  & 15.3 & 12.4 & 7.6  & 21.2 & 6.3 & 4.1 & 10.5 & 8.4 & 3.6 & 17.9 \\
ULTRA       & \underline{46.5} & \underline{34.3} & 70.4 & \underline{17.9} & \textbf{10.2} & \underline{32.4} & 14.5 & 8.3 & \textbf{32.1} & \underline{61.2} & 46.6 & \textbf{90.9} & 40.2 & 27.4 & 64.8 & \underline{36.5} & \underline{25.6} & \underline{58.1} \\
TRIX        & 45.5 & 33.3 & \textbf{71.0} & \underline{17.9} & \textbf{10.2} & 32.3 & \underline{14.6} & \underline{8.4} & \textbf{32.1} & \underline{61.2} & \underline{46.7} & 87.5 & \underline{43.3} & \underline{29.4} & \underline{65.8} & \underline{36.5} & \underline{25.6} & 57.7 \\
\midrule
\textbf{\ModelName{}} & \textbf{51.5} & \textbf{41.5} & 70.6 & \textbf{18.9} & \textbf{10.2} & \textbf{36.7} & \textbf{15.8} & \textbf{8.7} & 29.7 & \textbf{72.2} & \textbf{65.0} & 86.5 & \textbf{50.0} & \textbf{37.8} & \textbf{71.0} & \textbf{41.7} & \textbf{32.6} & \textbf{58.9} \\
\bottomrule
\end{tabular}
\end{adjustbox}
\end{minipage}



\centering
\begin{minipage}{\textwidth}
\begin{adjustbox}{width=\textwidth}
\begin{tabular}{lccccccccccccccc||ccc}
\toprule
\multicolumn{19}{c}{\textbf{Trained on GDELT}} \\
\midrule
\multirow{2}{*}{Model} 
& \multicolumn{3}{c}{\textbf{To ICEWS14}} 
& \multicolumn{3}{c}{\textbf{To ICEWS05-15}} 
& \multicolumn{3}{c}{\textbf{To ICEWS18}} 
& \multicolumn{3}{c}{\textbf{To YAGO}} 
& \multicolumn{3}{c}{\textbf{To WIKI}} 
& \multicolumn{3}{c}{\textbf{Total Avg}}\\
& MRR & H@1 & H@10 & MRR & H@1 & H@10 & MRR & H@1 & H@10 & MRR & H@1 & H@10 & MRR & H@1 & H@10 & MRR & H@1 & H@10 \\
\midrule
INGRAM      & 8.6  & 2.9  & 20.7 & 5.8  & 1.2  & 14.4 & 5.0  & 2.3  & 9.3  & 7.4  & 3.3  & 15.0 & 5.3 & 3.9 & 7.3 & 6.7 & 2.4 & 14.9 \\
ULTRA       & 33.6 & 24.5 & 50.7 & 38.3 & 26.8 & 60.5 & 14.4 & 8.6 & 31.8 & 48.5 & 40.1 & 63.2 & 36.5 & 27.2 & 61.2 & 34.3 & 25.4 & 53.5 \\
TRIX        & \underline{34.2} & \underline{25.2} & \underline{51.7} & \underline{39.3} & \underline{27.8} & \underline{61.0} & \underline{15.5} & \underline{9.3} & \underline{32.9} & \underline{51.3} & \underline{42.3} & \underline{65.5} & \underline{37.8} & \underline{29.2} & \underline{63.0} & \underline{35.6} & \underline{26.8} & \underline{54.8} \\
\midrule
\textbf{\ModelName{}} & \textbf{43.7} & \textbf{32.4} & \textbf{65.7} & \textbf{45.5} & \textbf{33.1} & \textbf{70.0} & \textbf{20.2} & \textbf{10.2} & \textbf{40.4} & \textbf{63.3} & \textbf{50.2} & \textbf{76.8} & \textbf{50.7} & \textbf{38.8} & \textbf{70.7} & \textbf{44.7} & \textbf{33.0} & \textbf{64.7} \\
\bottomrule
\end{tabular}
\end{adjustbox}
\end{minipage}
\end{table*}

\section{Experimental Result}
\label{sec:result}
Our central claim is that \ModelName{} enables fully-inductive cross-domain transfer across temporal knowledge graphs. To evaluate this, we investigate the following questions: \textbf{RQ1:} Does \ModelName{} achieve strong cross-domain transfer across TKGs with different domains, temporal granularities, and time spans? \textbf{RQ2:} Does the proposed temporal embedding effectively capture transferable temporal structural patterns? \textbf{RQ3:} Which components are essential for enabling effective cross-domain transfer?

\subsection{Cross-Domain Transfer Performance}
To answer \textbf{RQ1}, we evaluate \ModelName{} in fully-inductive cross-domain transfer: the model is trained on one TKG and applied directly to a structurally and temporally different TKG without retraining (Table~\ref{tbl:LinkPredictionResults}). \ModelName{} consistently and substantially outperforms all inductive baselines across all 15 transfer scenarios, confirming that our vocabulary-agnostic representations successfully transfer across domain boundaries. From the results, we highlight the following observations: 

%


\newtcolorbox[auto counter,crefname={Takeaway}{Takeaways}]%
  {takeawaybox}[2][]{
    boxsep=1pt,
    left=4pt, right=4pt, top=2pt, bottom=2pt,
    before skip=4pt, after skip=4pt,
    enhanced,
    sharp corners,
    #1
  }
 
\begin{takeawaybox}[label={takeaway:temporal_granularity}]
 
\textbf{Takeaway 1.}
   \ModelName{} generalizes well to datasets with varying \textbf{temporal granularities} and \textbf{spans}.
\end{takeawaybox}

The time span of the experimental datasets ranges from just 1 month (GDELT) to 189 years (YAGO), while their temporal granularities vary from 15 minutes (GDELT) to 1 year (YAGO). Despite being trained on one dataset and evaluated on another with significantly different temporal characteristics, \ModelName{} consistently achieves strong performance across all scenarios. This demonstrates the robustness and generalization ability of our sequential temporal embedding in handling diverse and unseen temporal information regardless of granularity and span changes. Furthermore, the performance gap between \ModelName{} and ULTRA is more pronounced when trained on ICEWS14 compared to ICEWS05-15, suggesting that the proposed sequential temporal embedding is particularly beneficial for smaller datasets.

\begin{takeawaybox}[label={takeaway:temporal_domain}]
 
\textbf{Takeaway 2.}
   \ModelName{} generalizes well across datasets from \textbf{different domains} and \textbf{densities}.
\end{takeawaybox}

The experimental datasets cover a wide range of domains, from encyclopedic knowledge in YAGO to diplomatic event data in ICEWS. \ModelName{} achieves impressive results even when trained on one domain and evaluated on another (see ``Trained on ICEWS14 to YAGO''), demonstrating the model’s ability to transfer across domains. This highlights that the learned quadruple representations from the relation encoder and quadruple encoder are capable of capturing domain-specific structural knowledge. Moreover, the datasets also vary in density: GDELT has more frequent events per timestamp, while ICEWS14 is relatively sparse. \ModelName{} consistently performs well in cross-dataset evaluation, demonstrating its robustness in event densities.

\begin{takeawaybox}[label={takeaway:task}]
 
\textbf{Takeaway 3.}
   \ModelName{} generalizes well to both \textbf{temporal knowledge graph interpolation} and \textbf{extrapolation} tasks.
\end{takeawaybox}

\ModelName{} achieves strong results when trained on the interpolation task setting and tested on the extrapolation settings (see ``Trained on ICEWS14 to ICEWS18''). We further observe that ULTRA retains an advantage over \ModelName{} on Hits@10 for YAGO when trained on ICEWS05-15 (90.9 vs.\ 86.5), even though \ModelName{} leads on MRR. Notably, over 90\% of test queries in YAGO follow a \textit{Strict Recurrency} pattern, where the same query appears in the immediately preceding snapshot \cite{gastinger2024history}. When historical recurrence is dominant, the local entity graph already places a large fraction of correct answers in the top-10, limiting the additional benefit of temporal embeddings for recall-oriented metrics such as Hits@10. 

However, when trained on GDELT and tested on YAGO, the performance gap becomes much more evident. Only 2.2\% of test queries in GDELT follow \textit{Strict Recurrency} pattern, compared to around 10\% in ICEWS14. This low \textit{Strict Recurrency} pattern distribution limits the learning on the local entity graph and transferability from GDELT to YAGO. Therefore, temporal embeddings in \ModelName{} become essential to transfer the \textit{Strict Recurrency} pattern from GDELT to YAGO. This highlights the importance of temporal embeddings for Temporal Transferability across TKGs by modeling relative temporal ordering. A similar performance trend is observed when testing on WIKI. However, the performance gap between ULTRA and \ModelName{} is larger, as WIKI contains a lower proportion of \textit{Strict Recurrency} queries\footnote{We evaluate transfer to temporal extrapolation datasets but do not \emph{pre-train} on them. Extrapolation training is unidirectional (past-only) and overemphasises the \textit{Strict Recurrency} pattern, which limits the learning of the bidirectional temporal structure needed for interpolation; empirically, extrapolation pre-training reduces transfer MRR to interpolation targets by roughly $20\%$. Interpolation datasets do not heavily depend on this pattern, so interpolation pre-training yields temporal representations that transfer well to both settings.}.

We further compare \ModelName{} against LLM-based temporal reasoning methods on extrapolation datasets in Table~\ref{tbl:Comparison_with_LLM}. ICL~\cite{lee2023temporal} and GenTKG~\cite{liao2024gentkg} are designed solely for temporal extrapolation; because scoring all candidate entities is computationally prohibitive for LLMs, they do not report MRR. Despite being a purely structure-driven model with no access to textual descriptions or language model priors, \ModelName{} achieves higher MRR and Hits@10 on both ICEWS18 and YAGO. GenTKG attains higher Hits@1 on ICEWS18, partly because it pre-filters candidates using entity-relation co-occurrence statistics, which narrows the prediction space. \ModelName{} does not rely on such filtering and generalises to both interpolation and extrapolation settings in a single unified framework.


\begin{table}[t]
    \centering
    \caption{
        Comparison with LLM-based temporal reasoning models on extrapolation datasets, trained on ICEWS14.
        ICL and GenTKG are evaluated only on ICEWS18 and YAGO, as they are designed solely for temporal extrapolation.
        LLM-based models cannot compute MRR because scoring all candidate entities is computationally prohibitive; their results are therefore fixed regardless of training source.
        Best results in \textbf{bold}.
    }
    \label{tbl:Comparison_with_LLM}
    \begin{adjustbox}{width=0.49\textwidth}
    \begin{tabular}{lcccccc}
        \toprule
        \multirow{2}{*}{\textbf{Model}}
        & \multicolumn{3}{c}{\textbf{ICEWS18}}
        & \multicolumn{3}{c}{\textbf{YAGO}} \\
        \cmidrule(lr){2-4} \cmidrule(lr){5-7}
        & MRR & H@1 & H@10 & MRR & H@1 & H@10 \\
        \midrule
        ICL~\cite{lee2023temporal}    & --            & 18.2          & 41.4          & --            & 72.6          & 84.6 \\
        GenTKG~\cite{liao2024gentkg}  & --            & \textbf{24.3} & 42.1          & --            & \textbf{79.2} & 84.3 \\
        \midrule
        \ModelName{}  & \textbf{22.8} & 12.8          & \textbf{44.4} & \textbf{79.2} & 73.6          & \textbf{88.6} \\
        \bottomrule
    \end{tabular}
    \end{adjustbox}
\end{table}

\subsection{Transductive Competitiveness}
Table~\ref{tbl:pretraining_results} compares \ModelName{} with TComplEx~\cite{tcomplexlacroix2020tensor}, TNTComplEx~\cite{tcomplexlacroix2020tensor}, TRIX~\cite{zhangtrix}, and ULTRA~\cite{galkintowards} in the standard supervised transductive setting. \ModelName{} outperforms both TRIX and ULTRA across all datasets on MRR, demonstrating that the temporal encoding in \ModelName{} provides a clear benefit even in the transductive setting. At the same time, \ModelName{} does not match the fully-trained transductive models TComplEx and TNTComplEx, which fit per-dataset embedding tables that scale with entity, relation, and timestamp vocabulary sizes. Despite this, \ModelName{} recovers 89--96\% of MRR and 97--99\% of Hits@10 on ICEWS14 and ICEWS05-15 using a single fixed vocabulary-agnostic model ($\approx$248--275K parameters). The gap is larger on GDELT (MRR recovery $\approx$71\%), where fine-grained 15-minute timestamps and dense event structure give dataset-specific embeddings more room to specialize. These results show that the cost of vocabulary-agnostic cross-domain generalization is modest on standard benchmarks, while unlocking the fully-inductive cross-domain transfer capability demonstrated in Table~\ref{tbl:LinkPredictionResults}.

\begin{table}[t]
    \centering
    \caption{
Transductive (in-domain) results. TComplEx and TNTComplEx use per-dataset embedding tables; GraphOracle, TRIX, ULTRA, and \ModelName{} use fixed vocabulary-agnostic models. Best results in \textbf{bold}; best among vocabulary-agnostic baselines \underline{underlined}. $\dagger$~GraphOracle results are post-correction after fixing two eval-time data leakage bugs; see Appendix~\ref{app:graphoracle}.
    }
    \label{tbl:pretraining_results}
    \resizebox{\columnwidth}{!}{
    \begin{tabular}{lccccccccc}
        \toprule
        \multirow{2}{*}{\textbf{Model}} &
         \multicolumn{3}{c}{\textbf{ICEWS14}} &
         \multicolumn{3}{c}{\textbf{ICEWS05-15}} & \multicolumn{3}{c}{\textbf{GDELT}} \\
        \cmidrule(lr){2-4} \cmidrule(lr){5-7} \cmidrule(lr){8-10}
        & MRR & H@10 & \#Param & MRR & H@10 & \#Param & MRR & H@10 & \#Param\\
        \midrule
        TComplEx   & \textbf{61.9} & 76.7          & 2,035,968 & 66.5          & 81.1          & 3,841,792 & \textbf{34.6} & \textbf{51.5} & 333,104 \\
        TNTComplEx & 60.7          & \textbf{77.2} & 2,624,856 & \textbf{66.6} & \textbf{81.7} & 3,101,800 & 34.1          & \textbf{51.5} & 354,300 \\
        \hline
        GraphOracle$^\dagger$ & 37.6 & 62.8       & 73,329    & 38.6          & 63.1          & 73,329    & --            & --            & --      \\
        TRIX       & 48.6          & 71.9          & 342,210   & \underline{45.6} & \underline{72.2} & 342,210 & \underline{21.0} & \underline{47.2} & 342,210 \\
        ULTRA      & \underline{49.5} & \underline{73.1} & 169,750 & 45.1       & 72.0          & 169,750   & 20.0          & 45.5          & 169,750 \\
        \hline
        \ModelName{} & 58.3        & 76.0          & 247,937   & 59.0          & 78.7          & 274,937   & 24.5          & 42.7          & 247,937 \\
        \textit{Gain vs.\ ULTRA} & \textit{+8.8} & \textit{+2.9} & -- & \textit{+13.9} & \textit{+6.7} & -- & \textit{+4.5} & \textit{$-$2.8} & -- \\
        \textit{Recovery vs.\ transductive baselines} & \textit{94.2\%} & \textit{98.4\%} & -- & \textit{88.6\%} & \textit{96.3\%} & -- & \textit{70.8\%} & \textit{82.9\%} & -- \\
        \bottomrule
    \end{tabular}}
\end{table}

\subsection{Temporal Analysis and Ablation Study}

\noindent\textbf{Temporal structural patterns (RQ2).}
To evaluate the ability of TE to capture temporal structural patterns, we compare \ModelName{} with ULTRA on two representative patterns using the subset from \cite{pan2024hge}: \textit{symmetric} quadruples, where $(s,p,o,\tau_1)$ implies $(o,p,s,\tau_2)$, and \textit{evolving} quadruples, where $(s,p,o,\tau_1)$ transitions to $(s,p’,o,\tau_2)$. As shown in Table~\ref{tbl:Pattern_performance}, \ModelName{} consistently outperforms ULTRA on both patterns, demonstrating that incorporating relative temporal signals enhances the model’s ability to learn and generalize temporal structural patterns.

\noindent\textbf{Ablation study (RQ3).}
Table~\ref{tbl:Ablation_Study} ablates the key components of \ModelName{}. GQR alone notably decreases Hits@1 but improves Hits@10, reflecting its role in capturing long-range context, while LQR alone reduces Hits@10 but improves Hits@1, reflecting its focus on local interactions; combining both is essential. Adding TE to LQR+GQR yields substantial gains: +8.1 MRR and +10.8 H@1 on transductive, and +4.0 MRR and +8.9 H@1 on cross-domain evaluation, confirming that transferable temporal encoding contributes significantly beyond structural propagation alone. Different TE integration strategies (TE-cat vs.\ TE-add) and T-MSG scoring variants show only minor differences, indicating the integration format is secondary.

\begin{table}[t]
    \centering
    \caption{Left: performance on temporal structural pattern subsets (trained on ICEWS14, tested on ICEWS05-15 subsets). Right: ablation study for key components (trained on ICEWS14, evaluated on ICEWS14 (transductive) and ICEWS05-15 (cross-domain)). T-MSG variants refer to different temporal message scoring functions.}
    \begin{subtable}[t]{0.44\linewidth}
        \centering
        \subcaption{Temporal structural patterns.}
        \label{tbl:Pattern_performance}
        \resizebox{\linewidth}{!}{%
        \begin{tabular}{lcccccc}
            \toprule
            \multirow{2}{*}{\textbf{Model}}
            & \multicolumn{3}{c}{\textbf{Symmetric}}
            & \multicolumn{3}{c}{\textbf{Evolve}} \\
            \cmidrule(lr){2-4} \cmidrule(lr){5-7}
            & MRR & H@1 & H@10 & MRR & H@1 & H@10 \\
            \midrule
            ULTRA             & 52.5          & 39.8          & 85.8          & 46.2          & 32.5          & 73.9 \\
            \ModelName{}+TE   & \textbf{67.0} & \textbf{53.7} & \textbf{90.7} & \textbf{59.8} & \textbf{46.7} & \textbf{84.3} \\
            \bottomrule
        \end{tabular}}
    \end{subtable}
    \hfill
    \begin{subtable}[t]{0.53\linewidth}
        \centering
        \subcaption{Ablation study.}
        \label{tbl:Ablation_Study}
        \resizebox{\linewidth}{!}{%
        \begin{tabular}{lcccccc}
            \toprule
            \multirow{2}{*}{\textbf{Strategy}}
            & \multicolumn{3}{c}{\textbf{Transductive}}
            & \multicolumn{3}{c}{\textbf{Fully-inductive}} \\
            \cmidrule(lr){2-4} \cmidrule(lr){5-7}
            & MRR & H@1 & H@10 & MRR & H@1 & H@10 \\
            \midrule
            \ModelName{}   & \textbf{58.3} & \textbf{48.9} & \textbf{76.0} & \textbf{42.9} & \textbf{35.3} & 55.3 \\
            \midrule
            GQR only           & 49.2 & 37.3 & 72.6 & 37.9 & 25.6 & \textbf{62.6} \\
            LQR only           & 46.5 & 39.0 & 60.0 & 40.0 & 32.5 & 52.6 \\
            LQR+GQR w/o TE     & 50.2 & 38.1 & 74.3 & 38.9 & 26.4 & 62.1 \\
            \midrule
            w/o TE-cat         & 56.3 & 47.0 & 74.8 & 41.5 & 33.4 & 53.0 \\
            with TE-add        & 57.3 & 47.0 & 75.8 & 42.7 & 34.0 & 55.0 \\
            \midrule
            T-MSG (TTransE)    & 56.3 & 45.6 & 75.8 & 40.9 & 33.3 & 54.3 \\
            T-MSG (TNTComplEx) & 58.0 & 48.0 & 75.8 & 41.3 & 34.3 & 55.3 \\
            \bottomrule
        \end{tabular}}
    \end{subtable}
\end{table}

\section{Conclusion}
We present \ModelName{}, the first temporal KG embedding model for \emph{cross-domain transfer}, requiring no dataset-specific entity, relation, or timestamp embeddings. \ModelName{} uses sinusoidal positional encodings for granularity-agnostic temporal representation and relation interaction graphs for vocabulary-agnostic structural transfer. Across 15 cross-dataset transfer scenarios spanning six TKGs, \ModelName{} consistently outperforms inductive baselines while remaining competitive with transductive models. Future directions include multi-source pre-training and scaling to more diverse TKG collections.

\section*{Acknowledgements}
We gratefully acknowledge computing time on the HoreKa supercomputer at the National High-Performance Computing Center at KIT (NHR@KIT), jointly funded by the German Federal Ministry of Education and Research (BMBF), the Ministry of Science, Research, and the Arts of Baden-W\"urttemberg, and the German Research Foundation (DFG). Jiaxin Pan and Osama Mohammed acknowledge funding from the EU Chips Joint Undertaking (GA 101140087, SMARTY) and from the BMBF sub-project 16MEE0444. Daniel Hern\'andez is funded by the German Research Foundation (DFG) -- SFB 1574 -- 471687386.

\section*{Declaration of Use of Generative AI}
Claude (Anthropic) was utilized to assist with editing and refining sections of this work, including text. The authors remain fully responsible for all scientific content, experimental results, and conclusions presented in this paper. All writing, research design, implementation, and analysis were carried out by the authors; generative AI was used solely as an editing aid and did not contribute to the research itself.

\bibliographystyle{splncs04}
\bibliography{custom}

@article{vrandevcic2014wikidata,
  title={Wikidata: A Free Collaborative Knowledgebase},
  author={Vrande{\v{c}}i{\'c}, Denny and Kr{\"o}tzsch, Markus},
  journal={Communications of the ACM},
  volume={57},
  number={10},
  pages={78--85},
  year={2014},
  publisher={ACM}
}

@article{lehmann2015dbpedia,
  title={Dbpedia--a large-scale, multilingual knowledge base extracted from wikipedia},
  author={Lehmann, Jens and Isele, Robert and Jakob, Max and Jentzsch, Anja and Kontokostas, Dimitris and Mendes, Pablo N and Hellmann, Sebastian and Morsey, Mohamed and Van Kleef, Patrick and Auer, S{\"o}ren and others},
  journal={Semantic web},
  volume={6},
  number={2},
  pages={167--195},
  year={2015},
  publisher={SAGE Publications Sage UK: London, England}
}

@article{hoffart2013yago2,
  title={YAGO2: A spatially and temporally enhanced knowledge base from Wikipedia},
  author={Hoffart, Johannes and Suchanek, Fabian M and Berberich, Klaus and Weikum, Gerhard},
  journal={Artificial intelligence},
  volume={194},
  pages={28--61},
  year={2013},
  publisher={Elsevier}
}

@inproceedings{du2026graphoracle,
  title={GraphOracle: Efficient Fully-Inductive Knowledge Graph Reasoning via Relation-Dependency Graphs},
  author={Du, Enjun and Liu, Siyi and Zhang, Yongqi},
  booktitle={Proceedings of the AAAI Conference on Artificial Intelligence},
  volume={40},
  number={23},
  pages={19055--19063},
  year={2026}
}

@misc{cai2024surveytemporalknowledgegraph,
      title={A Survey on Temporal Knowledge Graph: Representation Learning and Applications}, 
      author={Li Cai and Xin Mao and Yuhao Zhou and Zhaoguang Long and Changxu Wu and Man Lan},
      year={2024},
      eprint={2403.04782},
      archivePrefix={arXiv},
      primaryClass={cs.CL},
      url={https://arxiv.org/abs/2403.04782}, 
}

@article{su2024roformer,
  title={Roformer: Enhanced transformer with rotary position embedding},
  author={Su, Jianlin and Ahmed, Murtadha and Lu, Yu and Pan, Shengfeng and Bo, Wen and Liu, Yunfeng},
  journal={Neurocomputing},
  volume={568},
  pages={127063},
  year={2024},
  publisher={Elsevier}
}

@inproceedings{gastinger2024history,
  title={History Repeats Itself: A Baseline for Temporal Knowledge Graph Forecasting},
  author={Gastinger, Julia and Meilicke, Christian and Errica, Federico and Sztyler, Timo and Sch{\"u}lke, Anett and Stuckenschmidt, Heiner},
  booktitle={IJCAI},
  year={2024}
}

@inproceedings{
tcomplexlacroix2020tensor,
title={Tensor Decompositions for Temporal Knowledge Base Completion},
author={Timothée Lacroix and Guillaume Obozinski and Nicolas Usunier},
booktitle={International Conference on Learning Representations},
year={2020},
url={https://openreview.net/forum?id=rke2P1BFwS}
}

@inproceedings{jin2020recurrent,
  title={Recurrent Event Network: Autoregressive Structure Inferenceover Temporal Knowledge Graphs},
  author={Jin, Woojeong and Qu, Meng and Jin, Xisen and Ren, Xiang},
  booktitle={Proceedings of the 2020 Conference on Empirical Methods in Natural Language Processing (EMNLP)},
  pages={6669--6683},
  year={2020}
}

@inproceedings{tltcomplexzhang2022along,
  title={Along the Time: Timeline-traced Embedding for Temporal Knowledge Graph Completion},
  author={Zhang, Fuwei and Zhang, Zhao and Ao, Xiang and Zhuang, Fuzhen and Xu, Yongjun and He, Qing},
  booktitle={Proceedings of the 31st ACM International Conference on Information \& Knowledge Management},
  pages={2529--2538},
  year={2022}
}

@inproceedings{pan2024hge,
  title={HGE: embedding temporal knowledge graphs in a product space of heterogeneous geometric subspaces},
  author={Pan, Jiaxin and Nayyeri, Mojtaba and Li, Yinan and Staab, Steffen},
  booktitle={Proceedings of the AAAI Conference on Artificial Intelligence},
  volume={38},

  pages={8913--8920},
  year={2024}
}

@inproceedings{trivedi2017know,
  title={Know-evolve: Deep temporal reasoning for dynamic knowledge graphs},
  author={Trivedi, Rakshit and Dai, Hanjun and Wang, Yichen and Song, Le},
  booktitle={international conference on machine learning},
  pages={3462--3471},
  year={2017},
  organization={PMLR}
}

@inproceedings{xu2020tero,
  title={TeRo: A Time-aware Knowledge Graph Embedding via Temporal Rotation},
  author={Xu, Chengjin and Nayyeri, Mojtaba and Alkhoury, Fouad and Yazdi, Hamed Shariat and Lehmann, Jens},
  booktitle={Proceedings of the 28th International Conference on Computational Linguistics},
  pages={1583--1593},
  year={2020}
}

@inproceedings{goel2020diachronic,
  title={Diachronic embedding for temporal knowledge graph completion},
  author={Goel, Rishab and Kazemi, Seyed Mehran and Brubaker, Marcus and Poupart, Pascal},
  booktitle={Proceedings of the AAAI Conference on Artificial Intelligence},
  volume={34},

  pages={3988--3995},
  year={2020}
}

@inproceedings{leblay2018deriving,
  title={Deriving validity time in knowledge graph},
  author={Leblay, Julien and Chekol, Melisachew Wudage},
  booktitle={Companion proceedings of the the web conference 2018},
  pages={1771--1776},
  year={2018}
}

@article{lautenschlager2015icews,
  title={Icews event aggregations},
  author={Lautenschlager, Jennifer and Shellman, Steve and Ward, Michael},
  journal={Harvard Dataverse},
  volume={3},
  number={595},
  pages={28},
  year={2015}
}

@inproceedings{garcia-duran-etal-2018-learning,
  title={Learning sequence encoders for temporal knowledge graph completion},
  author={Garc{\'\i}a-Dur{\'a}n, Alberto and Duman{\v{c}}i{\'c}, Sebastijan and Niepert, Mathias},
  booktitle={Proceedings of the 2018 conference on empirical methods in natural language processing},
  pages={4816--4821},
  year={2018}
}

@inproceedings{galkintowards,
  title={Towards Foundation Models for Knowledge Graph Reasoning},
  author={Galkin, Mikhail and Yuan, Xinyu and Mostafa, Hesham and Tang, Jian and Zhu, Zhaocheng},
  booktitle={The Twelfth International Conference on Learning Representations},
 year={2024}
}

@inproceedings{liang2023learn,
  title={Learn from relational correlations and periodic events for temporal knowledge graph reasoning},
  author={Liang, Ke and Meng, Lingyuan and Liu, Meng and Liu, Yue and Tu, Wenxuan and Wang, Siwei and Zhou, Sihang and Liu, Xinwang},
  booktitle={Proceedings of the 46th international ACM SIGIR conference on research and development in information retrieval},
  pages={1559--1568},
  year={2023}
}

@inproceedings{li2022complex,
  title={Complex Evolutional Pattern Learning for Temporal Knowledge Graph Reasoning},
  author={Li, Zixuan and Guan, Saiping and Jin, Xiaolong and Peng, Weihua and Lyu, Yajuan and Zhu, Yong and Bai, Long and Li, Wei and Guo, Jiafeng and Cheng, Xueqi},
  booktitle={Proceedings of the 60th Annual Meeting of the Association for Computational Linguistics (Volume 2: Short Papers)},
  pages={290--296},
  year={2022}
}

@inproceedings{sun2021timetraveler,
  title={TimeTraveler: Reinforcement Learning for Temporal Knowledge Graph Forecasting},
  author={Sun, Haohai and Zhong, Jialun and Ma, Yunpu and Han, Zhen and He, Kun},
  booktitle={Proceedings of the 2021 Conference on Empirical Methods in Natural Language Processing},
  pages={8306--8319},
  year={2021}
}

@inproceedings{han2020explainable,
  title={Explainable subgraph reasoning for forecasting on temporal knowledge graphs},
  author={Han, Zhen and Chen, Peng and Ma, Yunpu and Tresp, Volker},
  booktitle={International conference on learning representations},
  year={2020}
}

@inproceedings{liu2022tlogic,
  title={Tlogic: Temporal logical rules for explainable link forecasting on temporal knowledge graphs},
  author={Liu, Yushan and Ma, Yunpu and Hildebrandt, Marcel and Joblin, Mitchell and Tresp, Volker},
  booktitle={Proceedings of the AAAI conference on artificial intelligence},
  volume={36},

  pages={4120--4127},
  year={2022}
}

@inproceedings{li2022tirgn,
  title={TiRGN: Time-Guided Recurrent Graph Network with Local-Global Historical Patterns for Temporal Knowledge Graph Reasoning.},
  author={Li, Yujia and Sun, Shiliang and Zhao, Jing},
  booktitle={IJCAI},
  pages={2152--2158},
  year={2022}
}

@inproceedings{zhu2021learning,
  title={Learning from History: Modeling Temporal Knowledge Graphs with Sequential Copy-Generation Networks},
  author={Zhu, Cunchao and Chen, Muhao and Fan, Changjun and Cheng, Guangquan and Zhang, Yan},
  booktitle={Proceedings of the AAAI Conference on Artificial Intelligence},
  volume={35},
  number={5},
  pages={4732--4740},
  year={2021}
}

@article{zhu2021neural,
  title={Neural bellman-ford networks: A general graph neural network framework for link prediction},
  author={Zhu, Zhaocheng and Zhang, Zuobai and Xhonneux, Louis-Pascal and Tang, Jian},
  journal={Advances in neural information processing systems},
  volume={34},
  pages={29476--29490},
  year={2021}
}

@inproceedings{teru2020inductive,
  title={Inductive relation prediction by subgraph reasoning},
  author={Teru, Komal and Denis, Etienne and Hamilton, Will},
  booktitle={International conference on machine learning},
  pages={9448--9457},
  year={2020},
  organization={PMLR}
}

@article{liu2021indigo,
  title={Indigo: Gnn-based inductive knowledge graph completion using pair-wise encoding},
  author={Liu, Shuwen and Grau, Bernardo and Horrocks, Ian and Kostylev, Egor},
  journal={Advances in Neural Information Processing Systems},
  volume={34},
  pages={2034--2045},
  year={2021}
}

@inproceedings{chen2022meta,
  title={Meta-knowledge transfer for inductive knowledge graph embedding},
  author={Chen, Mingyang and Zhang, Wen and Zhu, Yushan and Zhou, Hongting and Yuan, Zonggang and Xu, Changliang and Chen, Huajun},
  booktitle={Proceedings of the 45th international ACM SIGIR conference on research and development in information retrieval},
  pages={927--937},
  year={2022}
}

@inproceedings{lee2023ingram,
  title={InGram: Inductive knowledge graph embedding via relation graphs},
  author={Lee, Jaejun and Chung, Chanyoung and Whang, Joyce Jiyoung},
  booktitle={International Conference on Machine Learning},
  pages={18796--18809},
  year={2023},
  organization={PMLR}
}

@inproceedings{lee2023temporal,
  title={Temporal knowledge graph forecasting without knowledge using in-context learning},
  author={Lee, Dong-Ho and Ahrabian, Kian and Jin, Woojeong and Morstatter, Fred and Pujara, Jay},
  booktitle={Proceedings of the 2023 conference on empirical methods in natural language processing},
  pages={544--557},
  year={2023}
}

@inproceedings{liao2024gentkg,
  title={GenTKG: Generative Forecasting on Temporal Knowledge Graph with Large Language Models},
  author={Liao, Ruotong and Jia, Xu and Li, Yangzhe and Ma, Yunpu and Tresp, Volker},
  booktitle={NAACL-HLT (Findings)},
  year={2024}
}

@inproceedings{ding2024zrllm,
  title={zrLLM: Zero-Shot Relational Learning on Temporal Knowledge Graphs with Large Language Models},
  author={Ding, Zifeng and Cai, Heling and Wu, Jingpei and Ma, Yunpu and Liao, Ruotong and Xiong, Bo and Tresp, Volker},
  booktitle={Proceedings of the 2024 Conference of the North American Chapter of the Association for Computational Linguistics: Human Language Technologies (Volume 1: Long Papers)},
  pages={1877--1895},
  year={2024}
}

@inproceedings{zhangtrix,
  title={TRIX: A More Expressive Model for Zero-shot Domain Transfer in Knowledge Graphs},
  author={Zhang, Yucheng and Bevilacqua, Beatrice and Galkin, Mikhail and Ribeiro, Bruno},
  booktitle={The Third Learning on Graphs Conference},
  year={2025}
}

@inproceedings{jia2021complex,
  title={Complex temporal question answering on knowledge graphs},
  author={Jia, Zhen and Pramanik, Soumajit and Saha Roy, Rishiraj and Weikum, Gerhard},
  booktitle={Proceedings of the 30th ACM international conference on information \& knowledge management},
  pages={792--802},
  year={2021}
}

@article{luo2024chain,
  title={Chain of history: Learning and forecasting with llms for temporal knowledge graph completion},
  author={Luo, Ruilin and Gu, Tianle and Li, Haoling and Li, Junzhe and Lin, Zicheng and Li, Jiayi and Yang, Yujiu},
  journal={arXiv preprint arXiv:2401.06072},
  year={2024}
}

\clearpage
\appendix
\renewcommand{\thesection}{\Alph{section}}
\setcounter{section}{0}

\section{Parameter Count and Complexity}
From Table~\ref{table:time and space}, we observe that the number of parameters in \ModelName{} is agnostic to the dataset size. Since \ModelName{} does not initialize embeddings based on $|V|$, $|R|$, or $|\Tau|$, but instead relies on a fixed number of relation interactions $|H|$, its parameter count is only related to the embedding dimension $d$. This property makes \ModelName{} especially suitable for transfer learning scenarios.

The time complexity of \ModelName{} is primarily determined by the quadruple encoder, as the number of relations $|R|$ is significantly smaller than the number of entities $|V|$, allowing us to omit the complexity contribution of the relation encoder. Furthermore, since the local entity graph is much smaller than the global entity graph, the computational upper bound is dominated by the global quadruple representation. Utilizing NBFNet~\cite{zhu2021neural} as the encoder, the time complexity for a single layer is $O(|Q|d + |V|d^2)$. For $A$ layers, the total time complexity becomes $O(A(|Q|d + |V|d^2))$.

The memory complexity of \ModelName{} is also linear in the number of edges, expressed as $O(A|Q|d)$, as the quadruple encoder maintains and updates representations for each edge in the temporal knowledge graph.

\paragraph{Inference-time cost.} We report wall-clock inference cost on the largest benchmark. Evaluating the full GDELT test split ($341{,}961$ quadruples) takes approximately $21$ hours on a single NVIDIA A100. The bottleneck is the NBFNet-style conditional message passing: it materialises and updates a representation for every edge and loads the entire inference graph into GPU memory, so the cost grows with graph size and density. GDELT is the worst case among our benchmarks because its $15$-minute granularity produces a very dense event graph. Importantly, this cost is a property of inference on large \emph{target} graphs and not a training requirement: \ModelName{} never needs to be trained on a large graph in order to transfer to one. A model trained on the small ICEWS14 graph ($1.8$\,h/epoch) already achieves $26.1$ MRR on GDELT without any retraining, so multi-source pre-training can rely entirely on multiple small and diverse TKGs. Reducing inference cost on dense target graphs is a natural direction for future work; promising strategies include query-conditioned subgraph extraction and attention-based edge selection that prune the message-passing graph before propagation.

\begin{table*}[h]
\centering
    \caption{
    Parameter number and average training time for \ModelName{}.
    }
    \label{table:time and space}
{ 
\begin{tabular}{lccccc}
    \toprule
 Model & Datasets & $d$ & Parameter number\ \ & Batch Size &\ \ Average epoch time(h) \cr 
\midrule
 \multirow{3}{*}{\ModelName{}}  
 & ICEWS14&64& 247,937 & 16&1.8  \cr
 &ICEWS05-15& 64 &247,937& 2 &21.5 \cr
 &GDELT & 64 & 247,937 & 1 & 123.3 \cr
\bottomrule
\end{tabular}
}
\end{table*}

\begin{table*}
\centering
    \caption{
    Detailed statistics of datasets.
    }
    \label{table:dataset}
\begin{tabular}{lcccccc}
  \toprule
 Dataset &ICEWS14&ICEWS05-15&GDELT&ICEWS18 & YAGO &WIKI\cr
  \midrule
 Entities  &7,128 &10,488  &500 &23,033 & 10,623&12,554 \cr
Relations & 230 & 251  & 20& 256 & 10 &24 \cr
Times & 365 & 4017 &366 & 303 &189 &232\cr
Train& 72,826 &386,962 & 2,735,685& 373,018 & 161,540 &539,286\cr
Validation & 8,941& 46,275 & 341,961& 45,995& 19,523& 67,538\cr 
Test & 8,963 & 46,092 & 341,961 & 49,995 & 20,026 &63,110\cr 
    Granularity & Daily & Daily & 15 minutes & Daily & Yearly&Yearly  \cr
        \bottomrule
\end{tabular}
\end{table*}

\section{GraphOracle: Leakage Bugs and Exclusion Rationale}
\label{app:graphoracle}

We attempted to include GraphOracle~\cite{du2026graphoracle} as a baseline but discovered two independent eval-time data leakage bugs in its data loader (\texttt{load\_data.py}) that render all published results unreliable. We report the corrected numbers in Table~\ref{tbl:pretraining_results} and exclude GraphOracle from cross-domain experiments for the additional reason of computational infeasibility.

\paragraph{Bug 1 — Relation-dependency graph leak.}
The relation edges fed to the \texttt{RelationGraphEncoder} at evaluation time were constructed from the validation/test split rather than the training split, allowing relation co-occurrence patterns from the evaluation data to influence per-relation embeddings.

\paragraph{Bug 2 — Knowledge graph substrate leak.}
The query-conditioned subgraph used for BFS-style message passing was similarly built from the evaluation split. Because the data loader adds both $(h, r, t)$ and its inverse $(t, r_\text{inv}, h)$ for every quadruple, the literal answer edge $(h, r, \text{answer})$ of every evaluation query is guaranteed to appear in the message-passing graph. A per-query walk-through confirmed that 100\% of validation queries on ICEWS14 (17,649/17,649) have the exact answer edge present in the leaky graph.

\paragraph{Empirical impact.}
Two controlled runs on ICEWS14 (hidden\_dim=80, n\_layer=3) quantify the inflation:

\begin{center}
\begin{tabular}{lcc}
\toprule
Run & Test MRR & Test H@10 \\
\midrule
Leaky (original) & 0.671 & 0.915 \\
Fully fixed      & 0.376 & 0.628 \\
\bottomrule
\end{tabular}
\end{center}

The combined inflation is $+0.295$ absolute MRR (78\% relative) on ICEWS14.

\paragraph{Computational infeasibility.}
After fixing the leakage bugs, GraphOracle's subgraph extraction step becomes a practical bottleneck: it expands the neighbourhood of all query entities layer by layer, and the number of edges grows rapidly with graph density. On GDELT, which has far more events per timestamp than ICEWS14, validation alone was projected to exceed 29 hours on 4$\times$A100 GPUs, making both training and cross-domain transfer infeasible. We therefore exclude GraphOracle from cross-domain experiments and report only its corrected transductive results in Table~\ref{tbl:pretraining_results}.

\section{Incompatibility of Transductive Temporal Extrapolation Models}
\label{app:incompatible_baselines}

A natural question is why we do not compare \ModelName{} against well-known temporal knowledge graph reasoning models such as RE-NET~\cite{jin2020recurrent}, CyGNet~\cite{zhu2021learning}, and TiRGN~\cite{li2022tirgn}. These are \emph{transductive temporal extrapolation} models: they are trained and evaluated on a single graph with a fixed entity and relation vocabulary, and they forecast future facts within that same vocabulary. They are therefore architecturally incompatible with our fully-inductive cross-domain setting, in which the target inference graph introduces entirely unseen entities, relations, and timestamps.

The root cause is that each of these models relies on \emph{entity-indexed} parameters that are undefined for entities not seen during training. We verified this by inspecting their reference implementations:

\begin{itemize}
    \item \textbf{RE-NET}~\cite{jin2020recurrent} maintains per-entity recurrent states (\texttt{ent\_embeds}) that are updated autoregressively over historical snapshots. Each state is indexed by a fixed entity id.
    \item \textbf{CyGNet}~\cite{zhu2021learning} builds a per-entity historical co-occurrence matrix that drives its copy-generation mechanism. The matrix is defined over the training entity vocabulary and cannot score entities absent from it.
    \item \textbf{TiRGN}~\cite{li2022tirgn} evolves per-entity dynamic embeddings through a GCN--GRU module (\texttt{dynamic\_emb}), again keyed by fixed entity ids.
\end{itemize}

Because the target graph presents a new entity vocabulary, none of these entity-indexed parameters can be instantiated at inference time. Adapting the models to accept unseen entities---for example, by replacing their entity-indexed states with a structural, vocabulary-agnostic encoder---would remove the mechanisms that define them and yield fundamentally different architectures. We therefore restrict our cross-domain baselines to fully-inductive, vocabulary-agnostic models (INGRAM~\cite{lee2023ingram} and ULTRA~\cite{galkintowards}), which are the only prior methods directly applicable to this setting.

\section{Limitations}
\label{sec:limitation}
Although \ModelName{} demonstrates strong fully-inductive inference performance in cross-dataset scenarios, several limitations remain. First, training on larger graphs does not always lead to improved performance. We hypothesize that variations in temporal patterns across datasets of different sizes may influence results. Second, \ModelName{} can be computationally intensive for dense temporal knowledge graphs, as its space complexity scales with the number of events. Third, \ModelName{} is unable to perform relation prediction and time prediction. We aim to explore more efficient architectures and adaptive training strategies to address these challenges in future work.

\section{Relative Representation Transfer}
\label{sec:relative_representation}
 Subgraph (a) in Figure \ref{fig:relative_representation} shows the transfer of relative entity representations. In the left bottom KG, $(\text{South Korea} \xrightarrow{\text { meet }} \text{North Korea} \xrightarrow{\text { negotiate }} \text{US}) \wedge (\text{South Korea} \xrightarrow{\text { dialogue with }} \text{China} \xrightarrow{\text { negotiate }} \text{US})$ implies that $\text{South Korea} \xrightarrow{\text { negotiate }} \text{US}$. Given entity $v_1$, $v_2$, $v_3$, $v_4$, we can learn a structure that $(v_1 \xrightarrow{\text { meet }} v_2 \xrightarrow{\text { negotiate }} v_3) \wedge (v_1 \xrightarrow{\text { dialogue with }} v_4 \xrightarrow{\text { negotiate }} v_3)$ implies that $v_1 \xrightarrow{\text { negotiate }} v_3$ during training. In the right part, we substitute $v_1$, $v_2$, $v_3$, $v_4$ with Germany, India, UK and EU respectively, then we can infer that $\text{Germany} \xrightarrow{\text { negotiate }} \text{UK}$.
 
Subgraph (b) depicts the transfer of relative relation representations. In the left part, the upper relation graph depicts the interaction of relations in the bottom KG: $(\text{dialogue with} \xrightarrow{\text { h2h }} \text{meet} \xrightarrow{\text { t2h }} \text{negotiate}) \wedge (\text{dialogue with} \xrightarrow{\text { t2h }} \text{negotiate})$. Given relation $r_1$, $r_2$, $r_3$, we learn a relational structure: $(r_1 \xrightarrow{\text { h2h }} r_2 \xrightarrow{\text { t2h }} r_3) \wedge (r_1 \xrightarrow{\text { t2h }} r_3)$ during training. In the right part, we substitute $r_1$, $r_2$, $r_3$ with \textit{marries}, \textit{father of} and \textit{lives in} respectively, then we can the relational structure for the KG in the right part.

Subgraph (c) depicts the transfer of relative temporal ordering. In the left part, the upper relation graph depicts the relative temporal ordering of quadruples in the bottom TKG: $(\text{dialogue with} \xrightarrow{\Delta \text{=6} } \text{meet} \xrightarrow{\Delta \text{=4}} \text{negotiate}) \wedge (\text{dialogue with} \xrightarrow{\Delta \text{=5}} \text{negotiate})$. Given relation $r_1$, $r_2$, $r_3$, we learn a relational structure: $(r_1 \xrightarrow{\Delta \text{=6}} r_2 \xrightarrow{\Delta \text{=4}} r_3) \wedge (r_1 \xrightarrow{\Delta \text{=5}} r_3)$ during training. In the right part, we substitute $r_1$, $r_2$, $r_3$ with \textit{marries}, \textit{father of} and \textit{lives in} respectively, then we can the temporal transition information for the TKG in the right part.

\begin{figure*}[htbp]
    \centering
    \begin{subfigure}[b]{0.32\textwidth}
        \includegraphics[width=\linewidth]{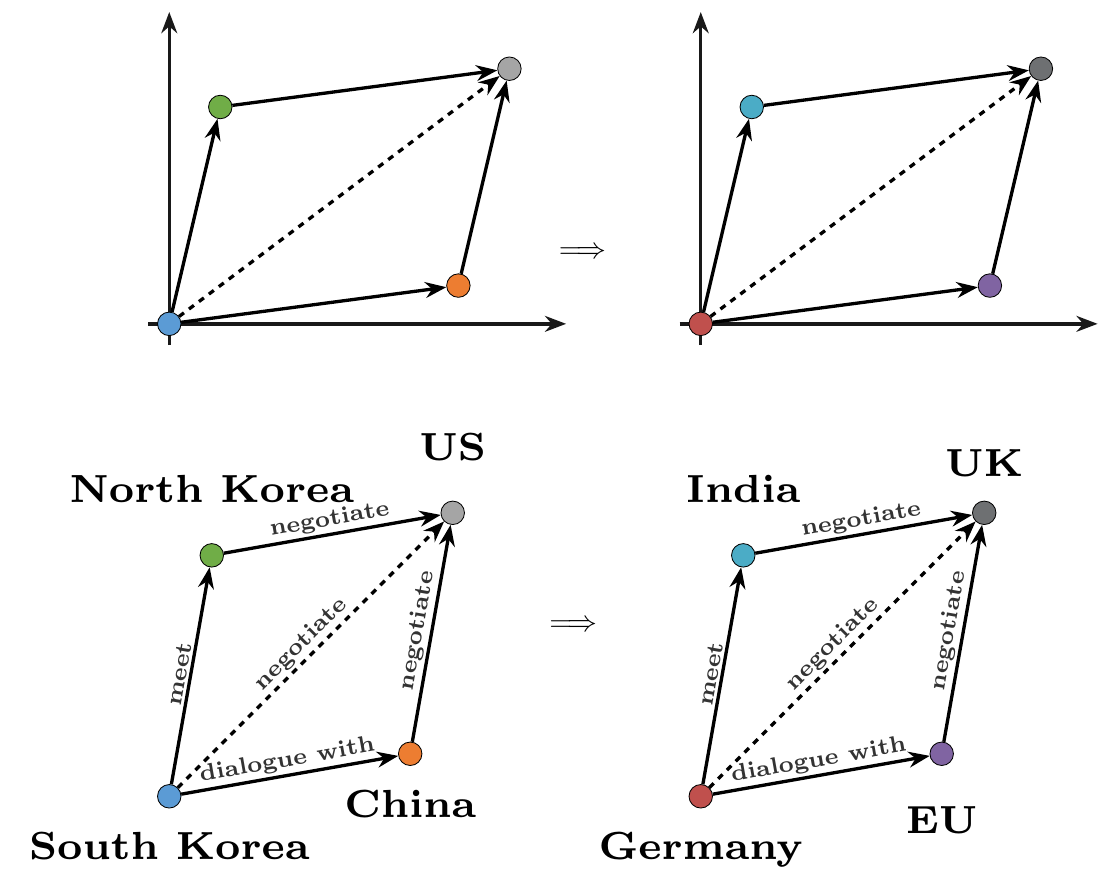}
        \caption{}
    \end{subfigure}
    \hfill
    \begin{subfigure}[b]{0.32\textwidth}
        \includegraphics[width=\linewidth]{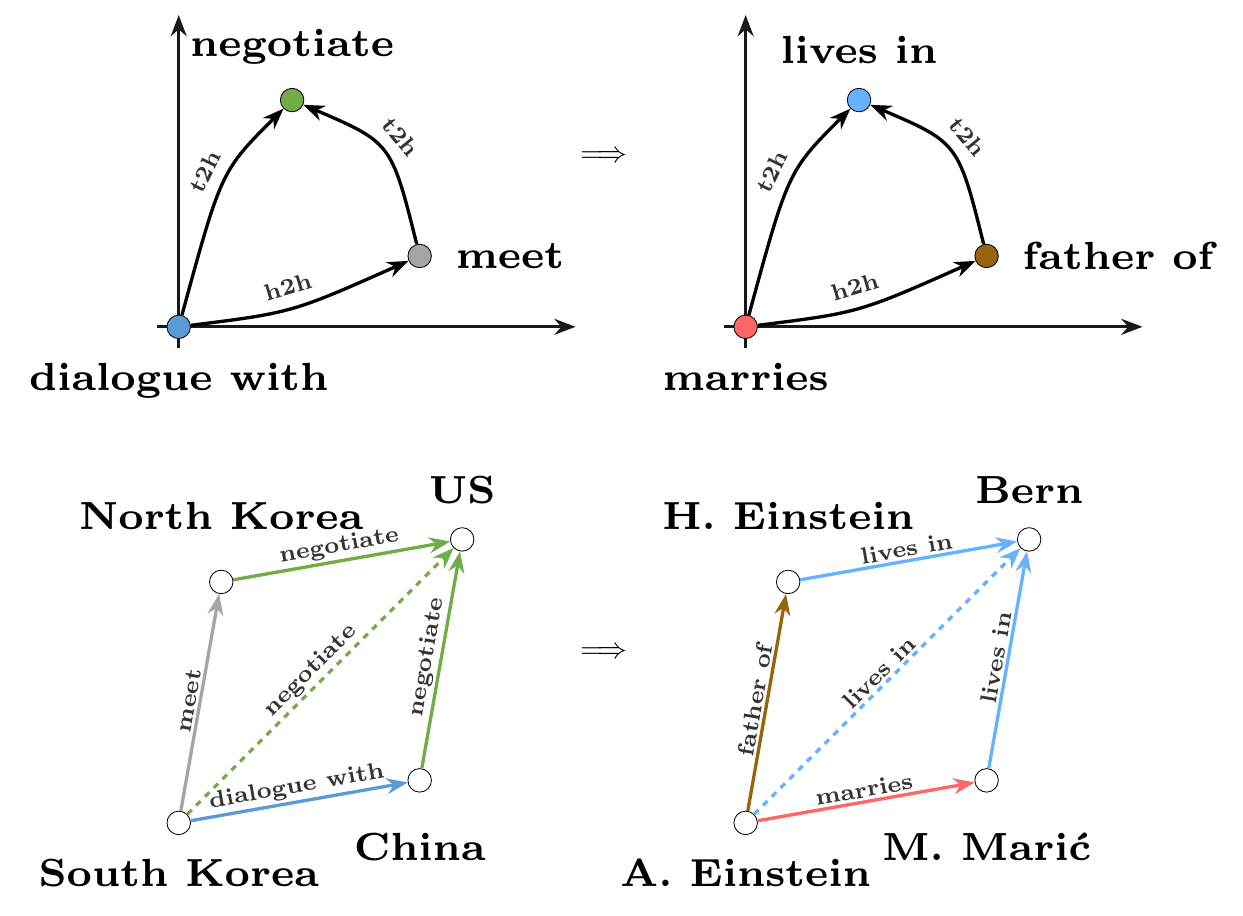}
        \caption{}
    \end{subfigure}
    \hfill
    \begin{subfigure}[b]{0.32\textwidth}
        \includegraphics[width=\linewidth]{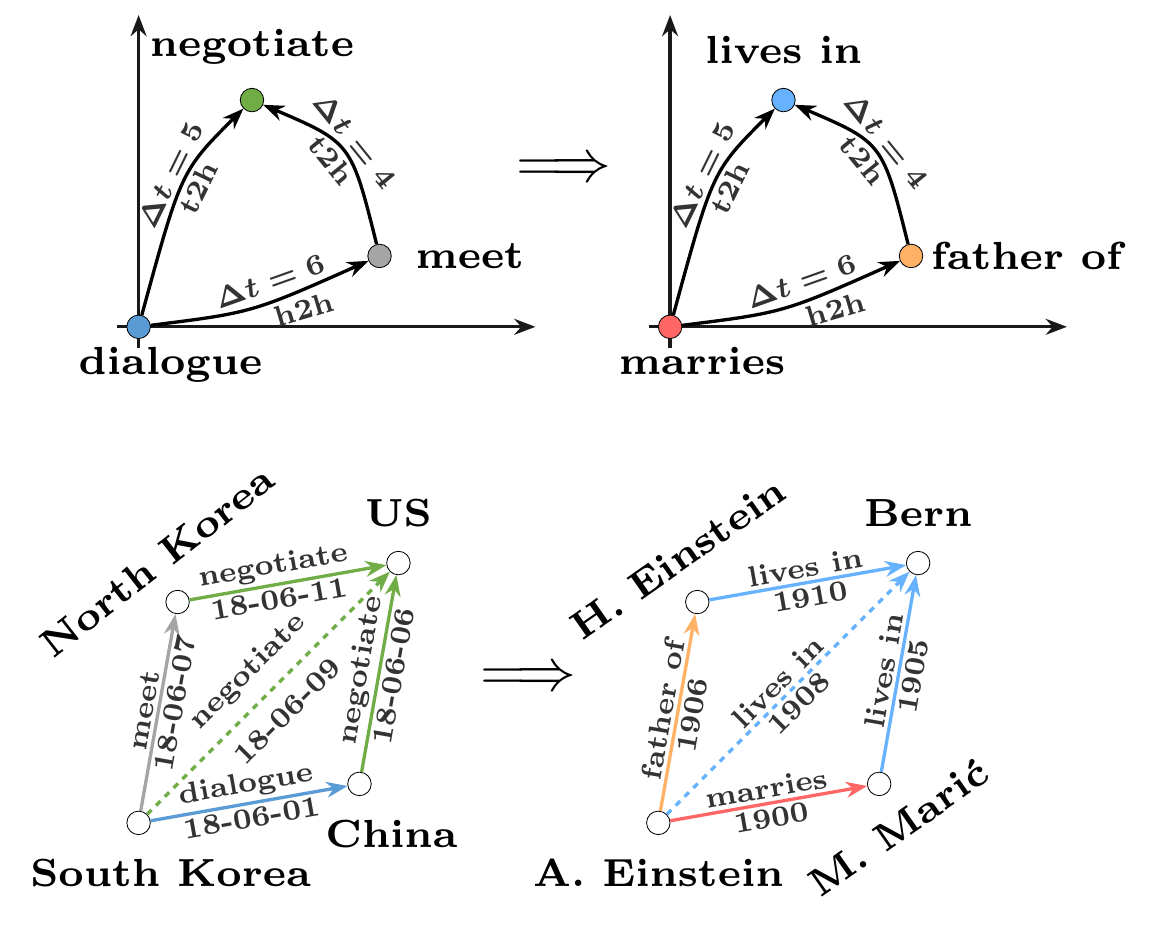}
        \caption{}
    \end{subfigure}
    \caption{Relative Representation Transfer}
    \label{fig:relative_representation}
\end{figure*}

\section{Hyperparameter and Training Details}
\label{sec:parameter}
Our experiments were conducted on 4 NVIDIA A100 GPUs with 48GB of RAM. We set the maximal training epoch as 10 and negative samples as 512. We use a batch size of 16, 2, and 1 for training ICEWS14, ICEWS05-15 and GDELT respectively. We set the dimension size $d$ as 64. We employed the AdamW optimizer and set the learning rate as 0.0005. More details can be seen in Table \ref{table:parameter}. For the baseline methods, we utilized the settings in their original papers. 

\begin{table*}
\centering
    \caption{
    Detailed hyperparameters. $GNN_r$ denotes the relation representation encoder, $GNN_q$ is quadruple representation encoder.
    }
    \label{table:parameter}
{ 
\begin{tabular}{lccccc}
\toprule
Module & Hyperparameter & Pre-training  \cr
\hline
    \multirow{4}{*}{$GNN_r$} & \# layers & 6 \cr
   &  hidden dim & 64 \cr
   & MSG & Dual \cr
   & AGG & sum \cr
   \hline
    \multirow{5}{*}{$GNN_e$} & \# layers & 6 \cr
   &  hidden dim & 64 \cr
   & T-MSG & TComplEx \cr
   & AGG & sum \cr
   & $k$ & ICEWS14:0; ICEWS0515, GDELT:1 \cr
   & $\beta$ & 10,000 \cr
\hline   
  \multirow{2}{*}{Prediction} & $f_{\theta}$ & 2-layer MLP \cr
  & $\alpha$ & ICEWS14:0.5; ICEWS0515, GDELT:0.8 \cr
\bottomrule
\end{tabular}
}
\end{table*}

\section{Hyperparameter Analysis}
We conduct a sensitivity analysis of three key hyperparameters, $k$, $\alpha$, and $\beta$ on model performance by pre-training on the ICEWS14 dataset. Figure \ref{fig:hyperparamter_study} presents the MRR and H@10 results under different values of each hyperparameter while keeping all others fixed at their optimal settings.

\textbf{Effect of $k$.}  
As shown in Figure \ref{fig:hyperparamter_study}(a), increasing $k$, which controls the length of the local entity graph, leads to a gradual decline in both MRR and H@10. As ICEWS14 is a news dataset and contains mainly short-term temporal relations, deeper propagation may  cause oversmoothing and decrease the model's performance.

\textbf{Effect of $\alpha$.}  
Figure~\ref{fig:hyperparamter_study}(b) shows that performance is highly sensitive to the choice of $\alpha$, which balances the contribution of local and global temporal contexts. Both MRR and H@10 significantly improve as $\alpha$ increases from 0 to 0.5, indicating the benefit of incorporating both types of temporal information. However, performance degrades when $\alpha$ approaches 1, implying that overemphasis on either context harms generalization. The results highlight the importance of maintaining a moderate balance between local and global temporal patterns.

\textbf{Effect of $\beta$.}  
As illustrated in Figure \ref{fig:hyperparamter_study}(c), the model is highly robust to the choice of $\beta$, which controls the sinusoidal frequency of temporal embeddings. Across a wide range of values—from $10^2$ to $10^6$—MRR and H@10 remain largely stable, with negligible performance fluctuations. 

 
The hyperparameter analysis reveals that $\alpha$ is the most critical parameter and requires careful tuning to balance local and global temporal information. The choice of $k$ should be dataset-dependent; for datasets characterized by short-term temporal patterns, smaller values of $k$ are preferable. In contrast, the model is relative insensitive to $\beta$, so we set it as 10,000 across all datasets.

\begin{figure*}
    \centering
    \begin{subfigure}[b]{0.32\textwidth}
        \includegraphics[width=\linewidth]{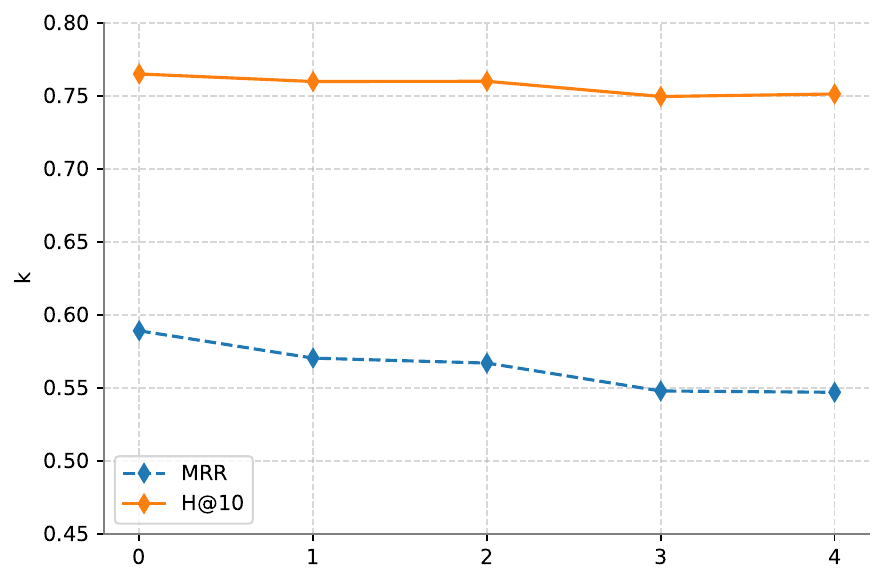}
        \caption{$k$'s performance}
    \end{subfigure}
    \hfill
    \begin{subfigure}[b]{0.32\textwidth}
        \includegraphics[width=\linewidth]{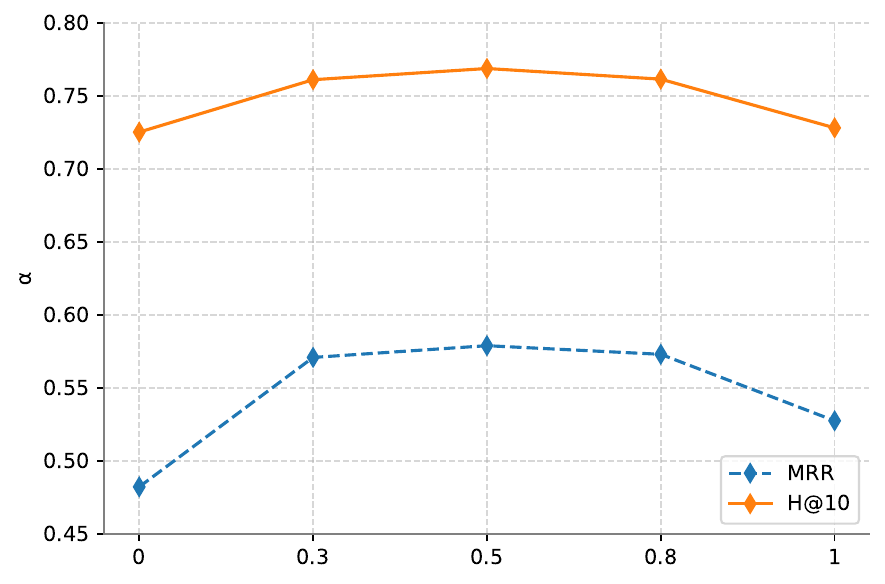}
        \caption{$\alpha$'s performance}
    \end{subfigure}
    \hfill
    \begin{subfigure}[b]{0.32\textwidth}
        \includegraphics[width=\linewidth]{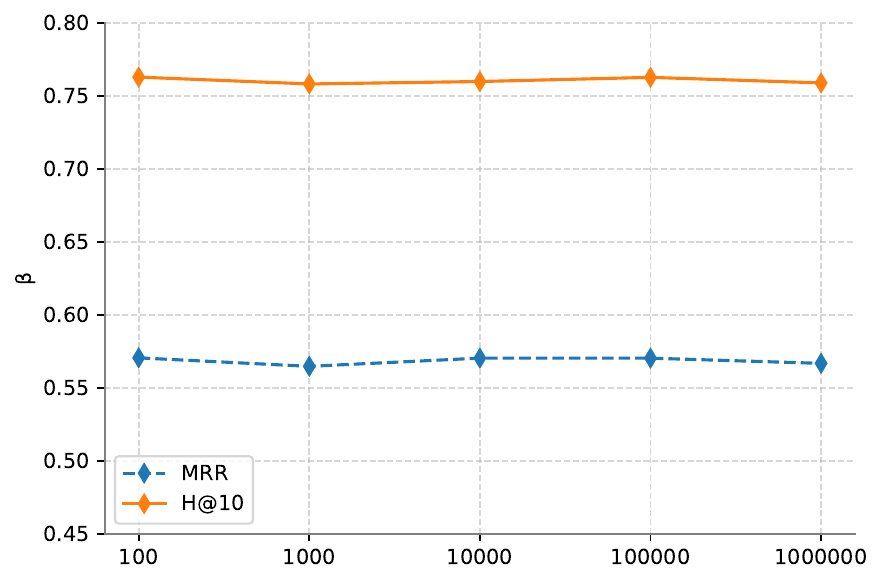}
        \caption{$\beta$'s performance}
    \end{subfigure}
    \caption{MRR and H@10 performance with different $k$, $\alpha$ and $\beta$. We report ICEWS14's pre-training performance and keep all other hyper-parameters as the fixed best setting.}
    \label{fig:hyperparamter_study}
\end{figure*}

\section{Case Study}
\label{sec:case_study}
\begin{figure*}
    \centering
    \begin{subcaptionbox}{Global Quadruple Representation \label{fig:first}}[0.48\textwidth]
        {\includegraphics[width=\linewidth]{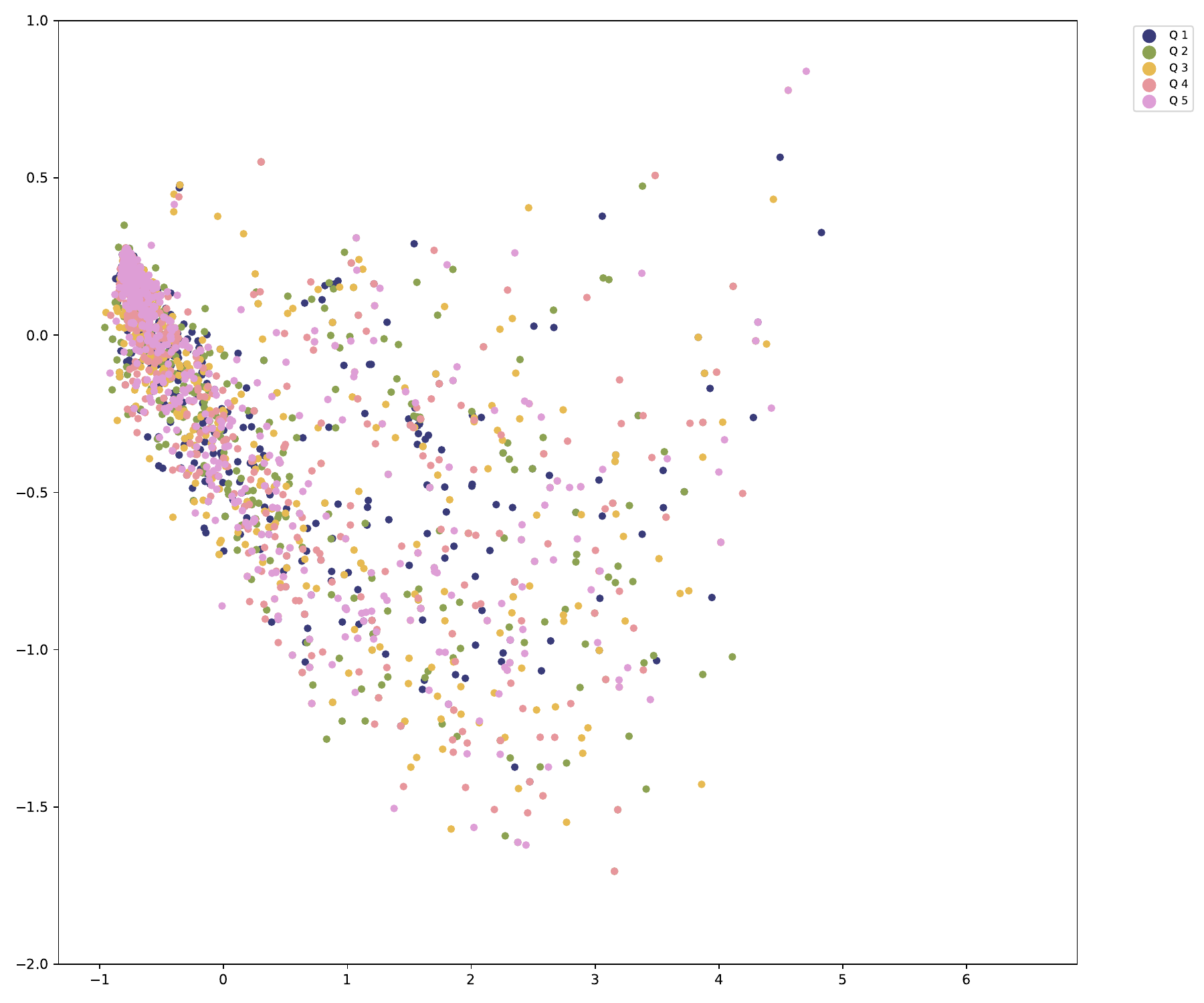}}
    \end{subcaptionbox}
    \hfill
    \begin{subcaptionbox}{Local Quadruple Representation\label{fig:second}}[0.48\textwidth]
        {\includegraphics[width=\linewidth]{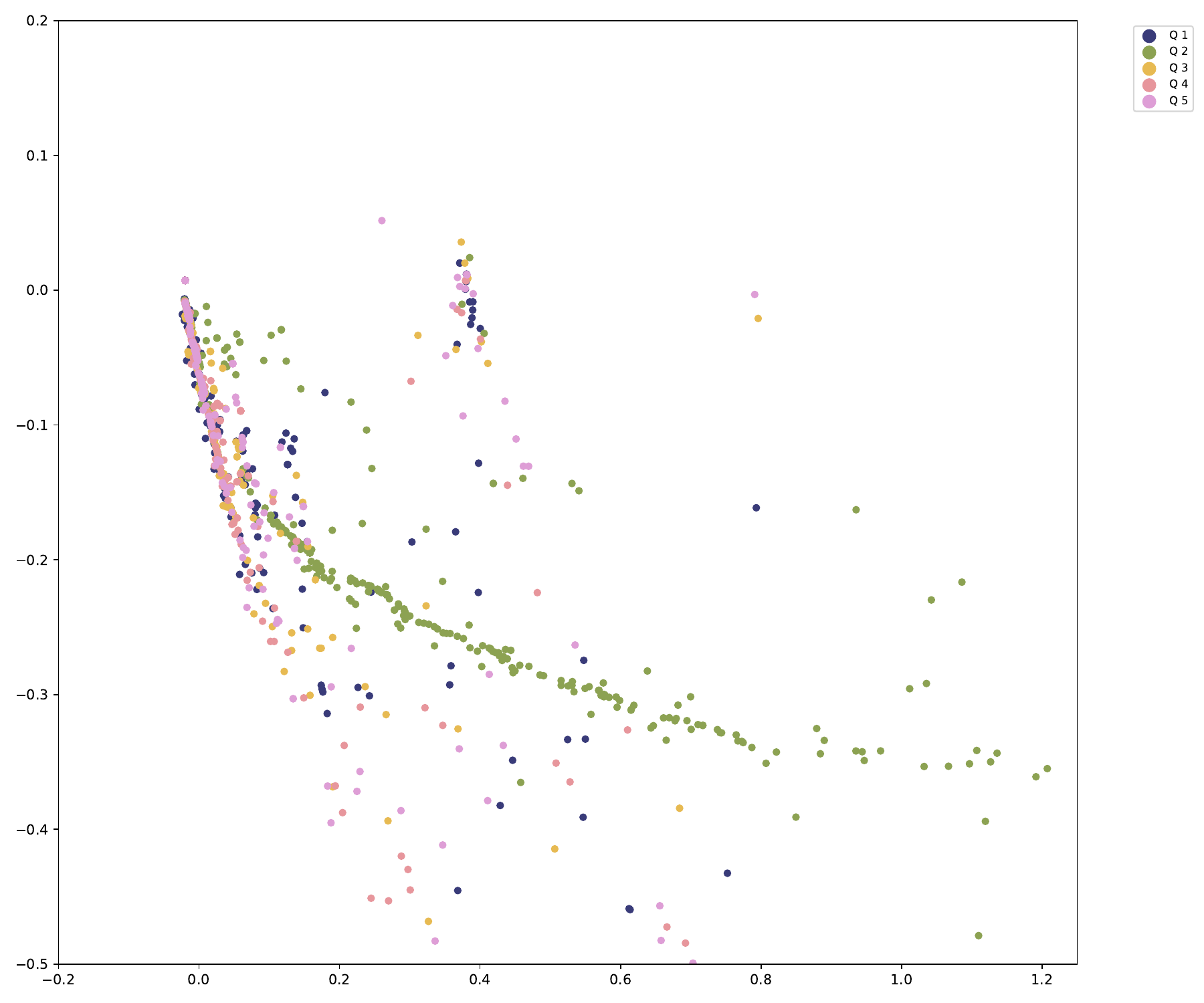}}
    \end{subcaptionbox}
    \caption{PCA visualizations of five quadruples with (China, Host a visit, ?, ?)}
    \label{fig:case_study}
\end{figure*}

To intuitively demonstrate \ModelName{}’s capability in generating compact and distinguishable embeddings, we employ PCA to visualize the distribution of embeddings for five representative quadruples: Q1: (China, Host a Visit, Macky Sall, 2014-02-19), Q2:(China, Host a Visit, Teo Chee Hean, 2014-10-28), Q3: (China, Host a Visit, Chuck Hagel, 2014-04-08), Q4: (China, Host a Visit, Sar Kheng, 2014-06-03), and Q5: (China, Host a Visit, Barack Obama, 2014-12-25), as illustrated in Figure \ref{fig:case_study}. The visualization reveals that the Global Quadruple Representation tends to capture general relational patterns, producing embeddings that are closely clustered for the same query structure (China, Host a Visit, ?, ?). In contrast, the Local Quadruple Representation is more sensitive to immediate temporal context, resulting in more distinctive embeddings across different time points. This dual representation mechanism highlights \ModelName{}’s strength in capturing both global structural semantics and local temporal dynamics. For example, for Q5, Barack Obama is frequently the tail entity for similar queries, leading the global encoder to assign him a high probability and correctly predict the answer. Meanwhile, in Q1, a supporting local event (Macky Sall, Make a Visit, China, 2014-02-18) appears in the surrounding context, enabling the local quadruple encoder to identify Macky Sall correctly. However, the absence of any relevant local events involving Teo Chee Hean in the Q2 causes the model to mispredict Barack Obama.

\subsection{Strict Recurrency Transfer}
\label{app:strict_recurrency}

To provide concrete evidence for the claim that temporal encoding is what transfers the \textit{Strict Recurrency} pattern across domains (Section~\ref{sec:result}), we isolate the Strict Recurrent subset of the YAGO test set and compare \ModelName{} against ULTRA when both are trained on GDELT. Although only $2.2\%$ of GDELT training queries are Strict Recurrent, this subset accounts for $92.7\%$ of YAGO test queries ($37{,}138$ queries). As shown in Table~\ref{tbl:strict_recurrency}, \ModelName{} substantially outperforms ULTRA on this subset, with a $+25.2$ gain in Hits@10. Because ULTRA lacks temporal encoding, it cannot recover the recurrency signal when the source graph provides few recurrent examples, whereas \ModelName{}'s relative temporal ordering transfers the pattern directly.

\begin{table}[h]
    \centering
    \caption{Performance on the YAGO Strict Recurrent subset ($92.7\%$ of YAGO test queries), with both models trained on GDELT. Best results in \textbf{bold}.}
    \label{tbl:strict_recurrency}
    \begin{tabular}{lcccc}
        \toprule
        Model & Subset & $n$ & MRR & H@10 \\
        \midrule
        ULTRA        & Strict Recurrent & $37{,}138$ & $52.3$          & $68.1$ \\
        \ModelName{} & Strict Recurrent & $37{,}138$ & $\textbf{68.1}$ & $\textbf{93.3}$ \\
        \bottomrule
    \end{tabular}
\end{table}

\subsection{Qualitative Error Analysis}
\label{app:error_analysis}

To understand \emph{where} \ModelName{}'s gains come from, we perform a per-query analysis of a model trained on ICEWS05-15 and tested on ICEWS14, comparing against ULTRA.

First, grouping queries by relation (Table~\ref{tbl:error_by_relation}) shows that the improvement is concentrated on relations whose events cluster in short time windows: \textit{Host a visit} and \textit{Consult} gain $+40.1$ and $+32.5$ MRR respectively, whereas a temporally uniform relation such as \textit{Make statement} sees only a marginal $+4.4$ gain. This matches the intuition that temporal encoding helps most when the timing of events is informative.

\begin{table}[h]
    \centering
    \caption{Per-relation MRR (trained on ICEWS05-15, tested on ICEWS14). $\Delta$MRR is \ModelName{} minus ULTRA.}
    \label{tbl:error_by_relation}
    \begin{tabular}{lcccc}
        \toprule
        Relation & $n$ & ULTRA & \ModelName{} & $\Delta$MRR \\
        \midrule
        Host a visit   & $432$  & $37.1$ & $77.2$ & $+40.1$ \\
        Consult        & $977$  & $49.6$ & $82.0$ & $+32.5$ \\
        Make statement & $1500$ & $46.3$ & $50.7$ & $+4.4$  \\
        \bottomrule
    \end{tabular}
\end{table}

Second, grouping queries by novelty (Table~\ref{tbl:error_by_novelty}) shows that ULTRA already reaches $93.9$ Hits@10 on recurrent triples, leaving little headroom there. \ModelName{}'s gains therefore concentrate on \emph{novel} triples, whose $(h, r)$ pair has recent activity but whose exact $(h, r, t)$ combination has not been seen: here \ModelName{} improves MRR by $+14.8$, versus $+10.5$ on recurrent triples. Temporal encoding lets the model exploit recent context to resolve genuinely new facts rather than merely repeating history.

\begin{table}[h]
    \centering
    \caption{MRR by query novelty (trained on ICEWS05-15, tested on ICEWS14). $\Delta$MRR is \ModelName{} minus ULTRA.}
    \label{tbl:error_by_novelty}
    \begin{tabular}{lcccc}
        \toprule
        Query type & $n$ & ULTRA & \ModelName{} & $\Delta$MRR \\
        \midrule
        Novel $(h,r,t)$     & $5097$ & $35.5$ & $50.3$ & $+14.8$ \\
        Recurrent $(h,r,t)$ & $3866$ & $71.4$ & $81.9$ & $+10.5$ \\
        \bottomrule
    \end{tabular}
\end{table}

\section{Theoretical Analysis}
\label{sec:theoretical_analysis}
In this section, we provide a theoretical analysis demonstrating the model’s ability to transfer across time and effectively capture a variety of periodic patterns.

\paragraph{Temporal Transferability}
As illustrated in Figure \ref{fig:relative_representation}, the temporal difference is the transferable information in TKGs. Therefore, we expect that such transferability must be followed in the time embedding space. 
Here, we aim to show that if four time points' indices—$\tau_1, \tau_2$ in the training graph and $\tau'_1, \tau'_2$ in the test graph—hold $\Delta T = \tau_2 - \tau_1 = \Delta T' =  \tau'_2 - \tau'_1$, then we expect that such transferability is preserved via sinusoidal positional encoding in the form of Euclidean distance
\(\|\mathrm{TE}(\tau_2)-\mathrm{TE}(\tau_1)\| = \|\mathrm{TE}(\tau'_2)-\mathrm{TE}(\tau'_1)\|\)
that is governed solely by their time difference $\Delta T$, rather than the
absolute times.  

The following theorem states and proves this: 

\begin{nbtheorem}[Time-Shift Invariance in Sinusoidal Positional Embeddings]
Let \(d\in\mathbb{N}\) be \emph{even} and fix positive frequencies
\(\omega_0,\omega_1,\dots,\omega_{\frac{d}{2}-1}>0\).
Define the \emph{sinusoidal temporal embedding}
\(\mathrm{TE}:\mathbb{N}\to\mathbb{R}^{d}\) component-wise by
\[
  \bigl[\mathrm{TE}(t)\bigr]_{2n}   \;=\;\sin\!\bigl(\omega_n\,t\bigr),
  \quad
  \bigl[\mathrm{TE}(t)\bigr]_{2n+1} \;=\;\cos\!\bigl(\omega_n\,t\bigr),
  \quad 0\le n<\tfrac{d}{2}.
\]

For any four time-points indices \(\tau_1,\tau_2,\tau'_1,\tau'_2\in\mathbb{N}\) set
\[
  \Delta_\tau \;:=\; \tau_2-\tau_1,
  \qquad
  \Delta_\tau' \;:=\; \tau'_2-\tau'_1.
\]
If \(\Delta_\tau=\Delta_\tau'\) then
\[
  \bigl\|\mathrm{TE}(\tau_2)-\mathrm{TE}(\tau_1)\bigr\|
  \;=\;
  \bigl\|\mathrm{TE}(\tau'_2)-\mathrm{TE}(\tau'_1)\bigr\|.
\]
That is, the Euclidean distance between two sinusoidal embeddings
depends \emph{only} on the time difference, not on the absolute
timestamps.
\end{nbtheorem}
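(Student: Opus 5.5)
The plan is to compute the squared Euclidean distance $\|\mathrm{TE}(\tau_2)-\mathrm{TE}(\tau_1)\|^2$ explicitly and show it is a function of $\Delta_\tau$ alone. Once that holds, the hypothesis $\Delta_\tau=\Delta_\tau'$ gives equality of the squared distances, and taking nonnegative square roots yields the statement. No earlier result of the paper is needed beyond the definition of $\mathrm{TE}$.

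First, I split the squared norm into a sum over the $d/2$ frequency pairs, which is possible because $d$ is even:
\[
\|\mathrm{TE}(\tau_2)-\mathrm{TE}(\tau_1)\|^2=\sum_{n=0}^{d/2-1}\Bigl[\bigl(\sin(\omega_n\tau_2)-\sin(\omega_n\tau_1)\bigr)^2+\bigl(\cos(\omega_n\tau_2)-\cos(\omega_n\tau_1)\bigr)^2\Bigr].
\]
Second, for each fixed $n$, I expand the bracket. Using $\sin^2+\cos^2=1$ twice, it equals $2-2\bigl(\sin(\omega_n\tau_2)\sin(\omega_n\tau_1)+\cos(\omega_n\tau_2)\cos(\omega_n\tau_1)\bigr)$. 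By the cosine subtraction formula, this is $2-2\cos\bigl(\omega_n(\tau_2-\tau_1)\bigr)=4\sin^2(\omega_n\Delta_\tau/2)$.

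An equivalent, more geometric way to see this step is that each pair is the point $(\sin\theta,\cos\theta)$ on the unit circle. The distance between two points on the unit circle depends only on the angle between them, namely $\omega_n\Delta_\tau$.

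Summing over $n$ gives
\[
\|\mathrm{TE}(\tau_2)-\mathrm{TE}(\tau_1)\|^2=\sum_{n=0}^{d/2-1}\bigl(2-2\cos(\omega_n\Delta_\tau)\bigr),
\]
which depends only on $\Delta_\tau$ and the fixed frequencies. Substituting $\Delta_\tau'=\Delta_\tau$ gives the same value for the primed pair.

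There is no real obstacle here. The only point needing care is the per-pair trigonometric identity in the second step, specifically getting the sign right in $\cos(a-b)=\cos a\cos b+\sin a\sin b$. One also needs the remark that the final quantity is even in $\Delta_\tau$, so the argument is unaffected by the fact that $\tau_2-\tau_1$ may be negative for $\tau_i\in\mathbb{N}$. Positivity of the $\omega_n$ is not actually used.
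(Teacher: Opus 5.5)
Your proof is correct and takes essentially the same route as the paper. Both show that each frequency pair contributes $4\sin^2(\omega_n\Delta_\tau/2)$ to the squared distance, then sum over $n$ and take square roots. The only difference is cosmetic: the paper gets the per-pair term from the sum-to-product identities, with the dependence on $\tau_1+\tau_2$ cancelling via $\sin^2+\cos^2=1$, whereas you expand the squares and use $\cos(a-b)=\cos a\cos b+\sin a\sin b$.
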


\begin{proof}[Step-by-step proof]

\smallskip
\hrule\smallskip
\noindent

Fix an index \(n\in\{0,\dots,\tfrac{d}{2}-1\}\) and two times
\(t_1,t_2\in\mathbb{N}\).  Denote
\[
  \Delta = t_2-t_1,
  \qquad
  \Sigma = t_1+t_2.
\]
Using the standard sum-and-difference identities,
\begin{align*}
  \sin(\omega_n t_2)-\sin(\omega_n t_1)
  &= 2\,
     \sin\!\Bigl(\tfrac{\omega_n\Delta}{2}\Bigr)\,
     \cos\!\Bigl(\tfrac{\omega_n\Sigma}{2}\Bigr), \\[6pt]
  \cos(\omega_n t_2)-\cos(\omega_n t_1)
  &= -\,2\,
     \sin\!\Bigl(\tfrac{\omega_n\Delta}{2}\Bigr)\,
     \sin\!\Bigl(\tfrac{\omega_n\Sigma}{2}\Bigr).
\end{align*}

Thus, the squared contribution of this frequency to the
Euclidean distance is
\[
  \bigl[\sin(\omega_n t_2)-\sin(\omega_n t_1)\bigr]^2
  + \bigl[\cos(\omega_n t_2)-\cos(\omega_n t_1)\bigr]^2
  \;=\;
  4\,\sin^2\!\Bigl(\tfrac{\omega_n\Delta}{2}\Bigr),
\]
because \(\cos^2+\sin^2=1\) removes any dependence on
\(\Sigma\).

Summing over all \(n\) gives
\[
  \bigl\|\mathrm{TE}(t_2)-\mathrm{TE}(t_1)\bigr\|^2
  \;=\;
  \sum_{n=0}^{\tfrac{d}{2}-1}
    4\,\sin^2\!\Bigl(\tfrac{\omega_n\Delta}{2}\Bigr)
  \;=\;
  4\sum_{n=0}^{\tfrac{d}{2}-1}
    \sin^2\!\Bigl(\tfrac{\omega_n\Delta}{2}\Bigr),
\]
which is a function of \(\Delta=t_2-t_1\) alone.
Consequently, if \(\tau_2-\tau_1=\tau'_2-\tau'_1\) then
\(\Delta_\tau=\Delta_\tau'\) and

\begin{align}
\bigl\|\mathrm{TE}(\tau_2) - \mathrm{TE}(\tau_1)\bigr\|^2 
&= 4 \sum_n \sin^2\!\left( \tfrac{\omega_n \Delta_\tau}{2} \right) \notag \\
&= 4 \sum_n \sin^2\!\left( \tfrac{\omega_n \Delta_\tau'}{2} \right) \notag \\
&= \bigl\|\mathrm{TE}(\tau_2') - \mathrm{TE}(\tau_1')\bigr\|^2. \tag{5}
\end{align}

Taking square roots results in the claimed equality of norms.
\end{proof}

\paragraph{Capturing Diverse Periodicities and Frequencies}
Temporal facts expressed as $(s, p, o, \tau)$, where $\tau \in \mathbb{T}$ represents the time annotation, can capture periodic phenomena, such as the Olympic Games occurring every four years or annual events like the Nobel Prize ceremonies.
Modeling diverse periodic 
behaviors is crucial in the fully-inductive transfer learning, as it enables systems to accurately represent and analyze the dynamic evolution of diverse relationships and events. 
In this section, we prove that \ModelName{} is capable of capturing periodic (with different frequencies) 
temporal facts.   
In the following theorem, we prove that the scorer in Equation \ref{eq:scorefunctionequation} can capture and express various frequencies even if we assume there $f_{\theta}$ is simply a linear function (with $m$ linear nodes for representing $m$ different frequencies). For simplicity, we loosely use $\tau$ as the timestamp index $i$ in $\tau_i$ in the following.

\begin{nbtheorem}[Multi-Frequency Affine Scorer
]
\label{thm:multi-frequency}
Let
\begin{itemize}
    \item $T>1$ be an integer time horizon ;
    \item $d_{\mathrm{PE}}\ge T$ be an even positional–encoding dimension;
    \item $P_1,\dots ,P_m,$ such that $P_i\mid T$ for every $i=1,\dots ,m,$ is a family of positive integers.
    \item 
    $[\mathrm{TE}(\tau)]_{2n}   = \sin(\omega_n\tau),
            \qquad
            [\mathrm{TE}(\tau)]_{2n+1} = \cos(\omega_n\tau),
            \qquad
            0\le n < \tfrac{d_{\mathrm{PE}}}{2},$
             with arbitrary real frequencies $\omega_0,\dots ,\omega_{d_{\mathrm{PE}}/2-1}$, be (standard) sinusoidal positional encoding defined for any $\tau$.

    \item $G = \{V_1,\dots,V_m\}\subset\mathbb{R}^{d_V}$ be $m$ fixed context vectors and,
    \item  $g_i : \{0,1,\dots ,T-1\}\longrightarrow \mathbb{R}$
 be a \emph{non-constant} sequence for each $i=1,\dots ,m$, that we extend to all $\mathbb{Z}$ by $P_i$-periodicity:
\[
  g_i(\tau) := g_i\!\bigl(\tau\bmod P_i\bigr),
  \qquad \tau\in\mathbb{Z}.
\]

\begin{align}
M &= 
\begin{pmatrix}
\mathrm{TE}(0)^\top \\
\mathrm{TE}(1)^\top \\
\vdots \\
\mathrm{TE}(T\!-\!1)^\top
\end{pmatrix}
\in \mathbb{R}^{T \times d_{\mathrm{PE}}}, \notag \\[6pt]
B^{(i)} &= 
\begin{pmatrix}
\mathrm{TE}(P_i)^\top    - \mathrm{TE}(0)^\top \\
\mathrm{TE}(P_i\!+\!1)^\top  - \mathrm{TE}(1)^\top \\
\vdots \\
\mathrm{TE}(T\!-\!1\!+\!P_i)^\top - \mathrm{TE}(T\!-\!1)^\top
\end{pmatrix}
\in \mathbb{R}^{T \times d_{\mathrm{PE}}}. \notag
\end{align}

be encoding matrices, we stack $B^{(i)}$ block-diagonally:
\[
  \widetilde B := \operatorname{diag}\!\bigl(B^{(1)},\dots ,B^{(m)}\bigr)
  \in\mathbb{R}^{mT\times m\,d_{\mathrm{PE}}}.
\]

\end{itemize}

\medskip
\noindent

\medskip
\noindent

\medskip
\noindent

\bigskip
\noindent
Define
\[
  g:=
  \begin{pmatrix}
    g_1\\
    \vdots\\
    g_m
  \end{pmatrix}
  \in\mathbb{R}^{mT},
  \quad
  w_{\mathrm{PE}}:=
  \begin{pmatrix}
    w_{\mathrm{PE}}^{(1)}\\
    \vdots\\
    w_{\mathrm{PE}}^{(m)}
  \end{pmatrix}
  \in\mathbb{R}^{m\,d_{\mathrm{PE}}}.
\]

\begin{enumerate}[label=\textup{(\Alph*)},leftmargin=*,itemsep=4pt]
  \item \textbf{Compatibility \((\mathcal{C})\).}\;  
        There exists $w_{\mathrm{PE}}\in\mathbb{R}^{m\,d_{\mathrm{PE}}}$ solving the linear system
        \[
          \boxed{\;
          \begin{pmatrix}
            I_m\!\otimes\! M \\[4pt]
            \widetilde B
          \end{pmatrix}
          w_{\mathrm{PE}}
          =
          \begin{pmatrix}
            g\\[3pt]0
          \end{pmatrix}}
          \tag{$\mathcal{C}$}
        \]
        where $I_m\otimes M$ is the block-diagonal Kronecker product whose $i$-th diagonal block equals $M$.
  \item \textbf{Existence of a scorer \((\mathcal{S})\).}\;  
        There exist parameters
        \[
          \theta
          \;=\;
          \bigl(W_{\mathrm{PE}},\,w_V,\,b\bigr)
          \in
          \mathbb{R}^{d_{\mathrm{PE}}\times m}
          \times\mathbb{R}^{d_V}
          \times\mathbb{R}
        \]
        such that the affine map
        \[
          f_\theta\bigl(V,\mathrm{TE}(\tau)\bigr)
          :=
          w_V^\top V
          \;+\;
          W_{\mathrm{PE}}^\top\,\mathrm{TE}(\tau)
          \;+\;
          b
          \;\in\mathbb{R}^{m}
        \]
        obeys, for each $i=1,\dots ,m$,
        \begin{enumerate}[label=\textup{(\roman*)}]
          \item $P_i$-periodicity in $\tau\in\mathbb{Z}$;
          \item non-constancy as a function of $\tau$;
          \item interpolation on the basic window:
                \[
                  \bigl[f_\theta\!\bigl(V_i,\mathrm{TE}(\tau)\bigr)\bigr]_i
                  \;=\;
                  w_V^\top V_i + g_i(\tau),
                  \qquad
                  0\le\tau<T.
                \]
        \end{enumerate}
\end{enumerate}

Given $\mathcal{C}$ and $\mathcal{S}$, the following holds
\begin{center}
\framebox{\(\mathcal{C}\;\Longleftrightarrow\;\mathcal{S}\)}.
\end{center}

\bigskip
\noindent
When these equivalent conditions hold one may choose \emph{any} solution $w_{\mathrm{PE}}$ of~\((\mathcal{C})\), set
\[
  W_{\mathrm{PE}}
  :=
  \bigl[w_{\mathrm{PE}}^{(1)}\;\cdots\;w_{\mathrm{PE}}^{(m)}\bigr],
  \qquad
  b=0,
\]
and pick an arbitrary $w_V$; the resulting $\theta$ satisfies \(\mathcal{S}\).
\end{nbtheorem}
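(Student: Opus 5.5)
The plan is to prove the two implications separately. Both reduce to reading the linear system $(\mathcal{C})$ block by block: the $i$-th block of $I_m\otimes M$ encodes interpolation on the window $\{0,\dots,T-1\}$, and the $i$-th block of $\widetilde B$ encodes the shift relation $\mathrm{TE}(\tau+P_i)^\top w^{(i)}_{\mathrm{PE}}=\mathrm{TE}(\tau)^\top w^{(i)}_{\mathrm{PE}}$ on that same window. Throughout I write $h_i(\tau):=\mathrm{TE}(\tau)^\top w^{(i)}_{\mathrm{PE}}$, so that $[f_\theta(V,\mathrm{TE}(\tau))]_i = w_V^\top V + h_i(\tau)+b$ when $W_{\mathrm{PE}}$ has columns $w^{(i)}_{\mathrm{PE}}$. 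Here the scalar $w_V^\top V$ is read as broadcast to every coordinate, which is the only way the displayed affine map typechecks.

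\emph{Direction $(\mathcal{C})\Rightarrow(\mathcal{S})$.} Take a solution $w_{\mathrm{PE}}$, set $W_{\mathrm{PE}}=[w^{(1)}_{\mathrm{PE}}\cdots w^{(m)}_{\mathrm{PE}}]$, $b=0$, and let $w_V$ be arbitrary.
\begin{itemize}
\item The top block gives $h_i(\tau)=g_i(\tau)$ for $0\le\tau<T$, which is exactly (iii).
\item Non-constancy (ii) follows at once, since $g_i$ is non-constant on $\{0,\dots,T-1\}$.
\item For periodicity (i), the bottom block gives $h_i(\tau+P_i)=h_i(\tau)$ only for $0\le\tau<T$, so this must be extended to all of $\mathbb{Z}$. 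Each $h_i$ is a trigonometric polynomial $\sum_n (a_n\sin\omega_n\tau+b_n\cos\omega_n\tau)$. Hence $D_i(\tau):=h_i(\tau+P_i)-h_i(\tau)$ is again a trigonometric polynomial in $\tau$ with the same frequency set, of "degree" $d_{\mathrm{PE}}/2$, and it vanishes at the $T$ consecutive integers $0,\dots,T-1$.
\item On integers, $D_i$ satisfies a linear recurrence of order at most $d_{\mathrm{PE}}$, with characteristic roots $e^{\pm\mathrm{i}\omega_n}$. So $T$ consecutive zeros force $D_i\equiv0$ on $\mathbb{Z}$ provided $T\ge d_{\mathrm{PE}}$.
\end{itemize}
This is the step I expect to be the main obstacle. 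The hypothesis is $d_{\mathrm{PE}}\ge T$, the opposite inequality, so the recurrence argument does not close directly.

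My fallback for (i) uses $P_i\mid T$ and the fact that $g_i$ is given as $P_i$-periodic. First define $h_i$ on $\mathbb{Z}$ directly as the $P_i$-periodic extension of its window values, so that (i) holds for the relevant restriction to integer $\tau$. Then argue that the window data plus the shift constraints determine $h_i$ at every integer, by stepping in increments of $P_i$. If this still fails for general real $\omega_n$, I will note that (i) must be read as "periodic on the sampled index set", which is how the theorem is used.

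\emph{Direction $(\mathcal{S})\Rightarrow(\mathcal{C})$.} Given $\theta$ satisfying $(\mathcal{S})$, compare (iii) with the form of $f_\theta$. This gives $h_i(\tau)+b=g_i(\tau)$ on the window, where $h_i$ uses the $i$-th column of $W_{\mathrm{PE}}$.
\begin{itemize}
\item Periodicity (i) at $\tau\in\{0,\dots,T-1\}$ gives $h_i(\tau+P_i)=h_i(\tau)$, because the constant terms $w_V^\top V$ and $b$ cancel. So the $\widetilde B$ rows hold with $w^{(i)}_{\mathrm{PE}}$ the $i$-th column of $W_{\mathrm{PE}}$.
\item The remaining issue is the bias $b$, since the top rows give $g-b\mathbf 1$ rather than $g$. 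To absorb it, I will show the constant function is representable, i.e.\ that there exists $c$ with $Mc=\mathbf 1$ and $B^{(i)}c=0$. A zero frequency supplies this via its cosine coordinate. Otherwise I will handle it by adjusting the convention that $b$ is folded into $w_V$, whose term is free.
\end{itemize}
Stacking the columns then yields a solution of $(\mathcal{C})$. The final sentence of the theorem, that any solution with $b=0$ and arbitrary $w_V$ works, is exactly the construction used in the first direction.
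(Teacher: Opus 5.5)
Both places you flag are where the argument breaks, and your fallbacks do not repair them.

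\textbf{The gaps in your proposal.} For (i), you cannot \emph{define} $h_i$ as the periodic extension of its window values. The function $h_i(\tau)=\mathrm{TE}(\tau)^\top w^{(i)}_{\mathrm{PE}}$ is already fixed by $\theta$ at every integer. Stepping by $P_i$ only carries the shift relation up to $\tau=T-1+P_i$, because the $B^{(i)}$ rows impose it only for $0\le\tau<T$. For the bias, folding $b$ into $w_V$ is impossible: $w_V^\top V_i$ appears on both sides of (iii) and cancels, leaving $h_i=g_i-b$ on the window.

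\textbf{Neither gap can be closed.} Both implications are false under the stated hypotheses.

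For $(\mathcal{C})\Rightarrow(\mathcal{S})$, take $m=1$, $T=P_1=2$, $d_{\mathrm{PE}}=4$, $(\omega_0,\omega_1)=(1,2)$ and $g_1=(0,1)$.
\begin{itemize}
\item System $(\mathcal{C})$ says precisely $(h_1(0),h_1(1),h_1(2),h_1(3))=(0,1,0,1)$.
\item On integers, the coordinates of $\mathrm{TE}$ span the same space as the sequences $\lambda^\tau$ with $\lambda\in\{e^{\pm\mathrm{i}},e^{\pm2\mathrm{i}}\}$. These are four distinct numbers, so the system is an invertible Vandermonde-type system and is solvable.
\item Yet for \emph{any} $\theta$, (i) and (iii) would make $h_1$ a $2$-periodic integer sequence with $h_1(0)\neq h_1(1)$, i.e.\ $h_1=A+B(-1)^\tau$ with $B\neq 0$.
\item This is impossible: $\pm1$ are not among these $\lambda$, and distinct geometric sequences are linearly independent.
\end{itemize}

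For $(\mathcal{S})\Rightarrow(\mathcal{C})$, take $m=1$, $T=P_1=d_{\mathrm{PE}}=2$, $\omega_0=\pi$ and $g_1=(2,0)$.
\begin{itemize}
\item On integers $\mathrm{TE}(\tau)=(0,(-1)^\tau)$, so $W_{\mathrm{PE}}=(0,1)^\top$ with $b=1$ satisfies (i)--(iii).
\item However, $Mw=(2,0)^\top$, with $M$ having rows $(0,1)$ and $(0,-1)$, has no solution.
\end{itemize}
This scorer is periodic on all of $\mathbb{Z}$. So reading (i) only on the sampled window, as you propose, rescues the forward direction but not the converse.

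\textbf{Comparison with the paper's proof.} The paper uses the same block-by-block reading as you, but simply asserts the two steps you could not justify.
\begin{itemize}
\item Part~I ``iterates'' the window relation $w^{(i)\top}_{\mathrm{PE}}\mathrm{TE}(\tau+P_i)=w^{(i)\top}_{\mathrm{PE}}\mathrm{TE}(\tau)$ to all $\tau\in\mathbb{Z}$. This uses the relation outside the range where it is imposed.
\item Part~II writes $Mw^{(i)}_{\mathrm{PE}}=g_i$, silently dropping $b$.
\end{itemize}
Your diagnosis is therefore more accurate than the printed argument. What is missing is a corrected hypothesis, not a cleverer step.

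\textbf{Hypotheses that would make it true.} Your recurrence idea identifies the right condition for the forward direction. Let $N\le d_{\mathrm{PE}}$ be the number of distinct values among $e^{\pm\mathrm{i}\omega_n}$. If $T\ge N$ (in particular if $T=d_{\mathrm{PE}}$), then $D_i$ is a combination of $N$ distinct geometric sequences vanishing at $N$ consecutive integers, so it vanishes on all of $\mathbb{Z}$. For the converse, you must either require $b=0$ in $(\mathcal{S})$, or assume that $Mc=\mathbf{1}$ with $B^{(i)}c=0$ for all $i$ is solvable (for example, some $\omega_n\in 2\pi\mathbb{Z}$, as you suggest). Then $w^{(i)}_{\mathrm{PE}}+bc$ solves $(\mathcal{C})$.
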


\begin{proof}[Step-by-step proof of \(\mathcal{C}\Longleftrightarrow\mathcal{S}\)]
We write $w_{\mathrm{PE}}=(w_{\mathrm{PE}}^{(1)},\dots ,w_{\mathrm{PE}}^{(m)})$ with
each block $w_{\mathrm{PE}}^{(i)}\in\mathbb{R}^{d_{\mathrm{PE}}}$.

\smallskip
\hrule\smallskip
\noindent
\textbf{Part I: \(\mathcal{C}\;\Longrightarrow\;\mathcal{S}\).}
Assume a vector $w_{\mathrm{PE}}$ satisfies system~\((\mathcal{C})\). 
Here, we show that if the compatibility condition holds, then a scorer exists. We divide the proof into four parts as follows:

\begin{enumerate}[label=\textbf{Step~\arabic*:},leftmargin=*,itemsep=6pt]
\item The top block $(I_m\otimes M)w_{\mathrm{PE}}=g$ is equivalent to
      \[
        M\,w_{\mathrm{PE}}^{(i)} \;=\; g_i \Big|_{\,0\le\tau<T},
        \quad
        i=1,\dots ,m.
      \tag{1}
      \]
\item 
      The block $\widetilde B\,w_{\mathrm{PE}}=0$ gives for every $i$
      \[
        B^{(i)}\,w_{\mathrm{PE}}^{(i)} = 0
        \;\Longrightarrow\;
        w_{\mathrm{PE}}^{(i)\top}
        \bigl(\mathrm{TE}(\tau+P_i)-\mathrm{TE}(\tau)\bigr)=0, 
        \quad
        0\le\tau<T.
      \]
      Thus
      \[
        w_{\mathrm{PE}}^{(i)\top}\mathrm{TE}(\tau+P_i)
        =w_{\mathrm{PE}}^{(i)\top}\mathrm{TE}(\tau),
        \quad 0\le\tau<T.
        \tag{2}
      \]
      Replacing $\tau$ by $\tau+P_i$ and iterating $k$ times shows
      \[
        w_{\mathrm{PE}}^{(i)\top}\mathrm{TE}(\tau+kP_i)
        = w_{\mathrm{PE}}^{(i)\top}\mathrm{TE}(\tau),
        \quad
        \forall\tau\in\{0,\dots ,T-1\},\;k\in\mathbb{Z},
      \]
      i.e., $P_i$-periodicity on \emph{all} integers.
\item 
      Combine~(1) with periodicity: for every $\tau\in\mathbb{Z}$
      \[
        w_{\mathrm{PE}}^{(i)\top}\mathrm{TE}(\tau)
        = g_i\!\bigl(\tau\bmod P_i\bigr).
        \tag{3}
      \]
\item 
      Define
      \[
        W_{\mathrm{PE}}
        :=
        \bigl[w_{\mathrm{PE}}^{(1)}\;\cdots\;w_{\mathrm{PE}}^{(m)}\bigr],
        \quad
        b:=0,
        \quad
        \text{choose any }w_V\in\mathbb{R}^{d_V}.
      \]
      Then
      \[
        f_\theta\bigl(V,\mathrm{TE}(\tau)\bigr)
        =
        w_V^\top V
        +
        \begin{pmatrix}
          w_{\mathrm{PE}}^{(1)\top}\mathrm{TE}(\tau)\\[-1pt]
          \vdots\\[-1pt]
          w_{\mathrm{PE}}^{(m)\top}\mathrm{TE}(\tau)
        \end{pmatrix},
      \]
      whose $i$-th component equals the right-hand side of~(3).  
      Therefore:
      a) it is $P_i$-periodic by construction;  
      b) it is non-constant because each $g_i$ is non-constant;  
      c) on $0\le\tau<T$, (3) reduces to \(w_{\!V}^{\top}V_i+g_i(\tau)\).
      Hence \(\mathcal{S}\) holds.
\end{enumerate}

\smallskip
\hrule\smallskip
\noindent
\textbf{Part II: \(\mathcal{S}\;\Longrightarrow\;\mathcal{C}\).}

Conversely, suppose parameters \(\theta=(W_{\mathrm{PE}},w_V,b)\) satisfy \(\mathcal{S}\).
Write the columns as
\(
  W_{\mathrm{PE}}=[\,w_{\mathrm{PE}}^{(1)}\mid\cdots\mid w_{\mathrm{PE}}^{(m)}].
\)

\begin{enumerate}[label=\textbf{Step~\arabic*:},leftmargin=*,itemsep=6pt]
\item 
      Evaluating property~(iii) at $0\le\tau<T$ gives
      \[
        M\,w_{\mathrm{PE}}^{(i)} = g_i \Big|_{\,0\le\tau<T},
        \quad
        i=1,\dots ,m.
        \tag{4}
      \]
\item \emph{$P_i$-periodicity implies the $B^{(i)}$ equations.}\;
      Property (i) yields, for every integer $\tau$,
      \(
        w_{\mathrm{PE}}^{(i)\top}\mathrm{TE}(\tau+P_i)=
        w_{\mathrm{PE}}^{(i)\top}\mathrm{TE}(\tau)
      \).
      Specialising to $\tau=0,\dots ,T-1$ one obtains
      \(
        B^{(i)}\,w_{\mathrm{PE}}^{(i)}=0
      \).
\item 
      Collecting (4) for all $i$ and the $B^{(i)}$ equations in a single vector $w_{\mathrm{PE}}$ yields exactly system~\((\mathcal{C})\).  Thus \(\mathcal{C}\) holds.
\end{enumerate}

\smallskip
\hrule\smallskip
\noindent
Both directions are proven; therefore \(\mathcal{C}\Longleftrightarrow\mathcal{S}\).  
The final comment about choosing $(W_{\mathrm{PE}},w_V,b)$ is precisely the construction in Part I, Step 4.
\end{proof}

Following Theorem \ref{thm:multi-frequency}, we are interested in knowing under which conditions the compatibility condition holds itself, i.e., when the system has a solution.

\begin{nbtheorem}[Universal Compatibility under harmonically–aligned frequencies]%
\label{thm:compat-always}
Fix
\[
  T>1,\qquad d_{\mathrm{PE}}\ge T,\qquad
  P_1,\dots ,P_m\in\mathbb N,\;\;P_i\mid T.
\]

\medskip
\noindent
Let \(L:=\operatorname{lcm}(P_1,\dots ,P_m)\)\footnote{least common multiple} and choose \emph{any} collection of
integers \(k_0,k_1,\dots ,k_{d_{\mathrm{PE}}/2-1}\).
Define the frequencies
\[
  \omega_n \;:=\; \frac{2\pi k_n}{L},
  \qquad 0\le n<\tfrac{d_{\mathrm{PE}}}{2},
\]
and build the sinusoidal positional encoding
\[
  [\mathrm{TE}(\tau)]_{2n}   = \sin(\omega_n\tau),
  \quad
  [\mathrm{TE}(\tau)]_{2n+1} = \cos(\omega_n\tau),
  \qquad \tau\in\mathbb N.
\]

Form the matrices as follows
\begin{align}
M &= 
\begin{pmatrix}
\mathrm{TE}(0)^\top \\
\vdots \\
\mathrm{TE}(T{-}1)^\top
\end{pmatrix}
\in \mathbb{R}^{T \times d_{\mathrm{PE}}}, \tag{9} \\[6pt]
B^{(i)} &= 
\begin{pmatrix}
\mathrm{TE}(P_i)^\top - \mathrm{TE}(0)^\top \\
\vdots \\
\mathrm{TE}(T{-}1{+}P_i)^\top - \mathrm{TE}(T{-}1)^\top
\end{pmatrix}
\in \mathbb{R}^{T \times d_{\mathrm{PE}}}. \tag{10}
\end{align}

For these frequencies, every \(B^{(i)}\) vanishes identically; hence, the stacked
compatibility system
\[
  \begin{pmatrix}
    I_m\otimes M \\[4pt] \mathrm{diag}(B^{(1)},\dots ,B^{(m)})
  \end{pmatrix}
  w_{\mathrm{PE}}
  =
  \begin{pmatrix}
    g\\[3pt]0
  \end{pmatrix}
  \tag{$\mathcal{C}$}
\]
is always solvable, regardless of the choice of \emph{any} target sequences
\(g_i\colon\{0,\dots ,T-1\}\to\mathbb R\).
\end{nbtheorem}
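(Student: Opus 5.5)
The proof splits along the block structure of $(\mathcal{C})$. First I will show that the bottom block $\mathrm{diag}(B^{(1)},\dots,B^{(m)})\,w_{\mathrm{PE}}=0$ holds for every $w_{\mathrm{PE}}$, by showing that each row of each $B^{(i)}$ is zero. Then I will solve the top block $(I_m\otimes M)w_{\mathrm{PE}}=g$, which decouples into the $m$ independent systems $M\,w_{\mathrm{PE}}^{(i)}=g_i$.

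\textbf{Vanishing of the $B^{(i)}$.} For the $B^{(i)}$ I will use the angle-addition identities:
\[
\sin(\omega_n(\tau+P_i))-\sin(\omega_n\tau)=0,\qquad
\cos(\omega_n(\tau+P_i))-\cos(\omega_n\tau)=0 \quad\text{for all }\tau
\]
exactly when $\omega_nP_i\in 2\pi\mathbb{Z}$. With $\omega_n=2\pi k_n/L$ this means $k_nP_i/L\in\mathbb{Z}$.

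Here I expect a genuine obstacle. Since $P_i\mid L$, the quotient $L/P_i$ is an integer, but $k_nP_i/L$ is an integer only when $(L/P_i)\mid k_n$. So for arbitrary integers $k_n$ the claim ``every $B^{(i)}$ vanishes'' is false. For example, $P_1=2$, $P_2=3$, $L=6$, $k_n=1$ gives $\omega_n P_1 = 2\pi/3$.

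What I would therefore actually prove is the corrected condition: $B^{(i)}\equiv 0$ for all $i$ if and only if $\omega_n\in\frac{2\pi}{D}\mathbb{Z}$ for every $n$, where $D=\gcd(P_1,\dots,P_m)$. Equivalently, $k_n$ must be a multiple of $L/D$.

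\textbf{Solving the top block.} Once the $B$-block is gone, $(\mathcal{C})$ is solvable if and only if each $g_i$ restricted to $\{0,\dots,T-1\}$ lies in the column space of $M$. I would argue via the discrete Fourier basis. Under the corrected frequencies, every row $\mathrm{TE}(\tau)$ is $D$-periodic in $\tau$. Hence the rows of $M$ span at most a $D$-dimensional space, and $Mw$ is always a $D$-periodic sequence.

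Conversely, suppose the $k_n$ are chosen so that the multiples $\{k_n D/L \bmod D\}$ cover all residues $0,\dots,\lfloor D/2\rfloor$. This is possible because $d_{\mathrm{PE}}/2\ge T/2\ge D/2$. Then the real DFT shows that the columns of $M$ span all $D$-periodic sequences on $\{0,\dots,T-1\}$. The constant sequence is obtained from a $\cos$ column with $k_n=0$.

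\textbf{Consequence and what survives.} This yields solvability exactly for targets $g_i$ that are $D$-periodic, not for arbitrary $g_i$. The phrase ``regardless of any target sequences'' cannot hold when $T>D$, because $\operatorname{rank}M\le D<T$.

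The salvageable statement therefore has two parts. The $B$-block vanishes under the divisibility condition above. The compatibility system is then solvable precisely for those $g_i$ that are $P_i$-periodic and satisfy the cross-constraints imposed by the shared frequency set. In the single-period case $m=1$, with $k_n$ ranging over $0,\dots,\lfloor P_1/2\rfloor$ scaled by $L/P_1$, this covers every $P_1$-periodic target. That is the form consistent with Theorem~\ref{thm:multi-frequency}.

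The main obstacle is thus not computational. It is reconciling the stated hypotheses with the claimed universality. I would first verify the small counterexample numerically, and then present the proof under the corrected divisibility and periodic-target assumptions.
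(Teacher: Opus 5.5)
Your diagnosis is correct: the statement is false as written, and the paper's own proof breaks at exactly the two places you identify. Its Step~2 argues ``each $P_i$ divides $L$, hence $\mathrm{TE}(\tau+P_i)=\mathrm{TE}(\tau)$''. This reverses the implication: $P_i$-periodicity gives $L$-periodicity, not conversely. Your example $P_1=2$, $P_2=3$, $T=L=6$, $k_n=1$ refutes it. Its Step~4 then takes a right inverse of $M$ ``because $d_{\mathrm{PE}}\ge T$''. A right inverse exists only if $\operatorname{rank}M=T$, whereas $L$-periodicity of the encoding already gives $\operatorname{rank}M\le L$, and the rank also depends on the $k_n$ (all $k_n=0$ gives rank $1$). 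Your characterization of when every $B^{(i)}$ vanishes is right: $\omega_n\in\tfrac{2\pi}{D}\mathbb{Z}$, i.e.\ $(L/D)\mid k_n$, by B\'ezout. So is the resulting bound $\operatorname{rank}M\le D$.

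There are three refinements to the salvage.

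\emph{A frequency-independent obstruction.} If $w_{\mathrm{PE}}$ solves $(\mathcal{C})$, then for $0\le\tau\le T-1-P_i$ the two blocks give $g_i(\tau+P_i)=w_{\mathrm{PE}}^{(i)\top}\mathrm{TE}(\tau+P_i)=w_{\mathrm{PE}}^{(i)\top}\mathrm{TE}(\tau)=g_i(\tau)$. So whenever some $P_i<T$, the target $g_i=\mathbf{1}_{\tau=0}$ is unreachable for \emph{every} choice of $\omega_n$.

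\emph{An off-by-one in your coverage step.} Residues $0,\dots,\lfloor D/2\rfloor$ need $\lfloor D/2\rfloor+1$ frequency slots, and $d_{\mathrm{PE}}/2\ge D/2$ does not supply them when $D=T=d_{\mathrm{PE}}$ is even. For $T=d_{\mathrm{PE}}=P_1=2$, the rows of $M$ are $(0,1)$ and $(0,(-1)^{k_0})$. Hence $\operatorname{rank}M=1$ for every integer $k_0$, even though $B^{(1)}=0$ and every target is trivially $P_1$-periodic. Your salvage, including the $m=1$ case, therefore needs $d_{\mathrm{PE}}\ge T+2$ when $D=T$ is even.

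\emph{Vanishing $B$-blocks are not needed.} Demanding that every $B^{(i)}$ vanish identically is more than $(\mathcal{C})$ requires, and it is what produces your ``cross-constraints''. The system is block-diagonal, so $B^{(i)}w_{\mathrm{PE}}^{(i)}=0$ block by block suffices. Suppose the $k_n$ include $0,1,\dots,\lfloor L/2\rfloor$. This is possible when $d_{\mathrm{PE}}/2\ge\lfloor L/2\rfloor+1$, which $d_{\mathrm{PE}}\ge T$ guarantees unless $L=T$ is even and $d_{\mathrm{PE}}=T$. Then for each $i$ the harmonics $k=jL/P_i$, $0\le j\le\lfloor P_i/2\rfloor$, are present, and their columns are $P_i$-periodic on all of $\mathbb{Z}$. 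Supporting $w_{\mathrm{PE}}^{(i)}$ on them solves block $i$ for any $P_i$-periodic $g_i$.

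Combined with the first point, this gives the sharp corrected statement. For such $k_n$, $(\mathcal{C})$ is solvable if and only if each $g_i$ is $P_i$-periodic on $\{0,\dots,T-1\}$, with no coupling between different $i$. This is exactly the periodicity already built into the $g_i$ of Theorem~\ref{thm:multi-frequency}. It also shows that the paper's frequency grid $\tfrac{2\pi}{L}\mathbb{Z}$ is the right choice, even though its claim that the $B$-block vanishes is not.
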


\smallskip
\hrule\smallskip
\noindent

\begin{proof}[We present the Step-by-step proof as follows.]

\textbf{Step 1.  Encoding is \(L\)-periodic.}
Because \(\omega_n L=2\pi k_n\),
\[
  \sin(\omega_n(\tau+L))=\sin(\omega_n\tau),\quad
  \cos(\omega_n(\tau+L))=\cos(\omega_n\tau),
\]
\(\forall\tau\in\mathbb Z\), so \(\mathrm{TE}(\tau+L)=\mathrm{TE}(\tau)\).

\smallskip
\textbf{Step 2.  Encoding is also \(P_i\)-periodic.}
Each \(P_i\) divides \(L\), hence \(\mathrm{TE}(\tau+P_i)=\mathrm{TE}(\tau)\).
Therefore
\[
  B^{(i)}=0\in\mathbb R^{T\times d_{\mathrm{PE}}},
  \qquad i=1,\dots ,m.
\]

\smallskip
\textbf{Step 3.  Compatibility degenerates to a single block.}
With every \(B^{(i)}\) equal to zero, system \((\mathcal{C})\) becomes
\[
  (I_m\otimes M)\,w_{\mathrm{PE}} \;=\; g .
  \tag{$\mathcal{C}'$}
\]

\smallskip
\textbf{Step 4.  Solve the reduced system.}
Write \(w_{\mathrm{PE}}=(w_{\mathrm{PE}}^{(1)},\dots ,w_{\mathrm{PE}}^{(m)})\)
and \(g=(g_1,\dots ,g_m)\) with blocks in \(\mathbb R^{T}\).
The Kronecker structure of \(I_m\otimes M\) splits \((\mathcal{C}')\) into
\(m\) independent systems
\[
  M\,w_{\mathrm{PE}}^{(i)} = g_i,\qquad i=1,\dots ,m.
  \tag{$\mathcal{C}_i$}
\]
Because \(d_{\mathrm{PE}}\ge T\), the rows of \(M\) are linearly dependent
at worst; pick any right inverse \(M^{+}\) (for instance the Moore–Penrose
pseudoinverse).  Then
\[
  w_{\mathrm{PE}}^{(i)} := M^{+}g_i
  \quad (i=1,\dots ,m)
\]
solves each \((\mathcal{C}_i)\), and hence \(w_{\mathrm{PE}}\) solves the full
system \((\mathcal{C})\).

\smallskip
\textbf{Step 5.  Conclusion.}
Compatibility holds \emph{for every choice of data} \(\{g_i\}\)
once the frequencies obey \(\omega_n=\tfrac{2\pi k_n}{L}\).
\end{proof}

\clearpage

\end{document}